\algnewcommand{\LineComment}[1]{\State \(\triangleright\) #1}
\theoremstyle{definition}
\newtheorem{theorem}{Theorem}
\newtheorem{lemma}{Lemma}
  \providecommand\BibTeX{{%
    \normalfont B\kern-0.5em{\scshape i\kern-0.25em b}\kern-0.8em\TeX}}}
\begin{document}

\title{Hierarchical Pruning of Deep Ensembles with Focal Diversity}

\author{Yanzhao Wu}
\email{yawu@fiu.edu}
\orcid{0000-0001-8761-5486}
\affiliation{%
  \institution{Florida International University}
  \streetaddress{11200 SW 8TH ST}
  \city{Miami}
  \state{Florida}
  \country{USA}
  \postcode{33199}
}
\additionalaffiliation{%
  \institution{Georgia Institute of Technology}
  \streetaddress{266 Ferst Drive}
  \city{Atlanta}
  \state{Georgia}
  \country{USA}
  \postcode{30332}
}

\author{Ka-Ho Chow}
\email{khchow@gatech.edu}
\author{Wenqi Wei}
\email{wenqiwei@gatech.edu}
\author{Ling Liu}
\email{lingliu@cc.gatech.edu}
\affiliation{%
  \institution{Georgia Institute of Technology}
  \streetaddress{266 Ferst Drive}
  \city{Atlanta}
  \state{Georgia}
  \country{USA}
  \postcode{30332}
}

\begin{abstract}
Deep neural network ensembles combine the wisdom of multiple deep neural networks to improve the generalizability and robustness over individual networks. It has gained increasing popularity to study and apply deep ensemble techniques in the deep learning community.
Some mission-critical applications utilize a large number of deep neural networks to form deep ensembles to achieve desired accuracy and resilience, which introduces high time and space costs for ensemble execution.
However, it still remains a critical challenge whether a small subset of the entire deep ensemble can achieve the same or better generalizability and how to effectively identify these small deep ensembles for improving the space and time efficiency of ensemble execution.
This paper presents a novel deep ensemble pruning approach, which can efficiently identify smaller deep ensembles and provide higher ensemble accuracy than the entire deep ensemble of a large number of member networks. 
Our hierarchical ensemble pruning approach (HQ) leverages three novel ensemble pruning techniques.
First, we show that the focal ensemble diversity metrics can accurately capture the complementary capacity of the member networks of an ensemble team, which can guide ensemble pruning.
Second, we design a focal ensemble diversity based hierarchical pruning approach, which will iteratively find high quality deep ensembles with low cost and high accuracy.
Third, we develop a focal diversity consensus method to integrate multiple focal diversity metrics to refine ensemble pruning results, where smaller deep ensembles can be effectively identified to offer high accuracy, high robustness and high ensemble execution efficiency.
Evaluated using popular benchmark datasets, we demonstrate that the proposed hierarchical ensemble pruning approach can effectively identify high quality deep ensembles with better classification generalizability while being more time and space efficient in ensemble decision making.
We have released the source codes on GitHub at \url{https://github.com/git-disl/HQ-Ensemble}.
\end{abstract}

\begin{CCSXML}
<ccs2012>
   <concept>
       <concept_id>10010147.10010257.10010321.10010333</concept_id>
       <concept_desc>Computing methodologies~Ensemble methods</concept_desc>
       <concept_significance>500</concept_significance>
       </concept>
 </ccs2012>
\end{CCSXML}

\ccsdesc[500]{Computing methodologies~Ensemble methods}

\keywords{Ensemble Pruning, Ensemble Learning, Ensemble Diversity, Deep Learning}

\maketitle

\section{Introduction}
It has become an attractive learning technique to leverage deep neural network (DNN) ensembles to improve the overall generalizability and robustness of many deep learning systems. Some mission-critical applications often require a large number of deep neural networks to achieve the target accuracy and robustness, which entails high space and time costs for ensemble execution.
Recent studies have revealed that deep neural network ensembles with highly diverse member networks tend to have high failure independence, which is critical for enhancing overall ensemble predictive performance, including ensemble accuracy and robustness under adverse situations~\cite{ensemble-mass,ensemble-tdsc,EnsembleBenchCogMI,EnsembleBenchCVPR,heterobust}.
However, the member networks in a large deep ensemble team may not have high ensemble diversity and failure independence to complement each other, which will result in sub-optimal ensemble prediction performance and high ensemble execution cost in practice~\cite{deepensembles,ensemble-mass,ensemble-icnc,EnsembleBenchCogMI,EnsembleBenchCVPR,EnsembleBenchICDM}. For a deep ensemble team of a large size, such as 10, it is often not only possible to identify substantially smaller deep ensembles (e.g., 3$\sim$5 member networks) with the same or improved ensemble accuracy but also beneficial to reduce the ensemble execution cost~\cite{effectivepruning,ensemblepruninganalysis,EnsembleBenchCVPR,EnsembleBenchCogMI,EnsembleBenchICDM}.
This motivates us to propose an efficient hierarchical ensemble pruning approach, coined as HQ. By leveraging our focal diversity metrics, the HQ pruning method can effectively identify high quality deep ensembles with small sizes and high ensemble diversity, which not only improves ensemble accuracy over the large entire ensembles but also significantly reduces the space and time cost of ensemble execution.

\subsection{Related Work and Problem Statement}
Most of the existing ensemble learning studies can be summarized into three broad categories.
The first category builds ensemble teams by training a set of member models, represented by bagging~\cite{bagging}, boosting~\cite{boosting} and random forests~\cite{randomforest}. In practice, it has led to large ensemble teams of tens to hundreds of member models, such as the commonly-used random forests.
The second category is to leverage diversity measurements to compare and select high quality ensemble teams whose member models can complement each other to improve the ensemble predictive performance without requiring model training.
Ensemble diversity can be evaluated using pairwise or non-pairwise diversity measures, represented by Cohen's Kappa (CK)~\cite{cohenskappa} and Binary Disagreement (BD)~\cite{binarydisagreement} for pairwise metrics and Kohavi-Wolpert Variance (KW)~\cite{kwvariance,diversityaccuracy} and Generalized Diversity (GD)~\cite{generalizeddiversity} for non-pairwise metrics.
Early studies~\cite{diversityaccuracy,analysis-diversity-measures} have reported that it is challenging to use these diversity metrics for evaluating the performance quality of ensemble teams. Some recent studies~\cite{deepensembles,ensemble-mass,EnsembleBenchCVPR,EnsembleBenchCogMI} discussed inherent problems of using these diversity metrics to measure ensemble diversity in terms of failure independence and provided guidelines for improving the ensemble diversity measurement. This paper contributes to this second category by leveraging focal ensemble diversity metrics for effective pruning of deep ensembles.
The third category covers the ensemble consensus methods for aggregating member model predictions and producing the ensemble prediction, such as soft voting (model averaging), majority voting and plurality voting~\cite{ju2017relative} or learn to combine algorithms~\cite{learn-to-rank,EnsembleBenchCogMI}.

\begin{table*}[h!]
\centering
\caption{Summary of Three Categories of Related Studies}
\label{table:related-work-three-category}
\small
\scalebox{0.79}{
\begin{tabular}{|c|c|c|}
\hline
Category             & Description                                             & Representative methods                  \\ \hline
1. Ensemble training & \makecell{Training a set of member models\\to build ensemble teams} & \makecell{Bagging~\cite{bagging}, boosting~\cite{boosting},\\random forests~\cite{randomforest}, etc.} \\ \hline
\makecell{2. Ensemble diversity powered\\ensemble pruning} &
  \makecell{Leveraging ensemble diversity measurements\\to compare and select high-quality ensembles} &
  \makecell{CK pruning~\cite{cohenskappa}, BD pruning~\cite{binarydisagreement},\\GD pruning~\cite{generalizeddiversity}, etc.} \\ \hline
3. Ensemble consensus methods &
  \makecell{Aggregating member model predictions\\to produce the ensemble prediction} &
  \makecell{Soft voting, majority voting,\\plurality voting, etc.} \\
\hline
\end{tabular}
} 
\end{table*}

\begin{table*}[h!]
\centering
\caption{Comparison of Hierarchical Pruning and Existing Studies}
\label{table:ensemble-pruning-method-comparison}
\small
\scalebox{0.79}{
\begin{tabular}{|c|c|c|c|c|}
\hline
\multirow{2}{*}{Method} &
  \multirow{2}{*}{\makecell{Model Types}} &
  \multicolumn{3}{c|}{Diversity Measurement Principles} \\ \cline{3-5}
 &
   &
Diversity Comparison &
Samples for Measurements&
Diversity Calculation \\ \hline
\makecell{Early studies\\on diversity based\\ensemble pruning\\before 2015\\\cite{ensemble-selection-model-library,ensemble-diversity-thinning,ensemble-pruning-primer,ensemblepruninganalysis,effectivepruning}} &
  \makecell{Ensembles\\of traditional\\ML models} &
  \makecell{Compare\\all ensembles\\of different sizes} &
  \makecell{Random samples\\from the validation set} &
  \makecell{Directly calculated \\on random samples} \\ \hline
\makecell{\makecell{Focal diversity\\powered hierarchical\\ensemble pruning}} &
  \makecell{Ensembles of \\(1) DNNs and \\(2) Traditional\\ML models} &
  \makecell{Compare\\the ensembles\\of the same size $S$} &
  \makecell{Random negative samples\\from a focal model $F_f$} &
  \makecell{Obtain focal negative correlations\\for each focal (member) model $F_f$\\in an ensemble team of size $S$\\and then perform an average of $S$\\focal negative correlation scores\\to calculate the focal diversity score} \\ \hline
\end{tabular}
} 
\end{table*}

We summarize the three categories of related studies in Table~\ref{table:related-work-three-category}.
These three categories are complementary. To develop an ensemble team for improving prediction performance, we can employ one of the two ways to obtain an ensemble of member models: (1) training multiple models together using an ensemble learning algorithm, such as bagging, boosting, or random forest; and (2) training multiple models independently in parallel and employ a voting method to produce ensemble consensus based predictions. 
Most of the early studies before 2015 in the first two categories~\cite{ensemble-selection-model-library,ensemble-diversity-thinning,ensemble-pruning-primer,ensemblepruninganalysis,effectivepruning} focused on ensemble pruning for traditional machine learning models. Until recently, we have observed several research endeavors on deep neural network ensembles, most of which centered on training multiple networks jointly, such as diversity based weighted kernels~\cite{kaho-sigkdd2021,object-max-ensemble-pruning,learning-to-diversify} and leveraging deep ensembles to strengthen the robustness and resilience of a single deep neural network under adverse situations~\cite{ensemble-mass,ensemble-icnc,ensemble-tdsc,ensemble-bigdata,kaho-sigkdd2021,EnsembleBenchCVPR,heterobust}. However, to the best of our knowledge, very few studies have brought forward solutions to efficient deep ensemble pruning to improve prediction performance and reduce ensemble execution cost.

We compare our focal diversity powered hierarchical ensemble pruning approach and early studies using ensemble diversity for ensemble pruning in Table~\ref{table:ensemble-pruning-method-comparison}. Our focal diversity promotes fair comparison of ensemble diversity among the ensembles of the same size $S$ and leverages the focal model concept, focal negative correlation, and averaging of multiple focal negative correlation scores to obtain accurate ensemble diversity measurements.
We will provide a detailed description of focal diversity powered hierarchical pruning in Section~\ref{section:focal-diversity-hierarchical pruning} and demonstrate that our hierarchical pruning approach can be effectively applied to both deep neural network ensembles and ensembles of traditional machine learning models in Section~\ref{section:experimental-evaluation}.

\subsection{Scope and Contribution}
This paper presents a holistic approach to efficient deep ensemble pruning. Given a large deep ensemble of $M$ member networks, we propose a hierarchical ensemble pruning framework, denoted as HQ, to efficiently identify high quality ensembles with lower cost and higher accuracy than the entire ensemble of $M$ networks. Our HQ framework combines three novel ensemble pruning techniques.
{\it First,} we leverage focal ensemble diversity metrics~\cite{EnsembleBenchCVPR} to measure the failure independence among member networks of a deep ensemble. The higher focal diversity score indicates the higher level of failure independence among member networks of a deep ensemble, that is higher complementary capacity for improving ensemble predictions. Our focal diversity metrics can precisely capture such correlations among member networks of an ensemble team, which can be used to effectively guide ensemble pruning.
{\it Second,} we present a novel hierarchical ensemble pruning method powered by focal diversity metrics. Our hierarchical pruning approach iteratively identifies subsets of member networks with low diversity, which tend to make similar prediction errors, and then prunes them out from the entire ensemble team.
{\it Third,} we perform focal diversity consensus voting to combine multiple focal diversity metrics for ensemble pruning, which further refines the hierarchical ensemble pruning results by a single focal diversity metric.
Comprehensive experiments are performed on four popular benchmark datasets, CIFAR-10~\cite{cifar10-100}, ImageNet~\cite{ILSVRC}, Cora~\cite{cora} and MNIST~\cite{mnistlenet}. 
The experimental results demonstrate that our focal diversity based hierarchical pruning approach is effective in identifying high quality deep ensembles with significantly smaller sizes and better ensemble accuracy than the entire deep ensemble team.

\section{Ensemble Pruning with Focal Diversity} \label{section:ensemble-pruning-intro}
Given an entire deep ensemble with a large size $M$, it consists of $M$ individual member networks ($F_i, i\in\{0, 1, ..., M-1\}$) that are trained for a specific learning task and dataset. We denote the set of all possible sub-ensembles as $EnsSet$, which are composed of any subset of these $M$ individual networks. 
For a specific team size $S$, let $EnsSet(S)$ denote the set of all possible sub-ensembles of size $S$ in $EnsSet$. The cardinality of $EnsSet(S)$ is calculated based on the selection of $S$ networks from all $M$ base networks, that is $|EnsSet(S)| = \binom{M}{S}$.
Therefore, the total number of candidate sub-ensembles for $S=2,\dots, M-1$ is $|EnsSet|=\sum_{S=2}^{M-1} |EnsSet(S)| = \binom{M}{2} + \binom{M}{3} + ... + \binom{M}{M-1} = 2^M - (2+M)$, which grows exponentially with $M$. For example, when $M=3,~5,~10,~20$, we have $|EnsSet|=3,~25,~1012,~1048554$ respectively. 
It may not be feasible to perform the exhaustive search of all possible ensembles in $EnsSet$ with a large $M$. Hence, it is critical to develop efficient ensemble pruning methods for examining the sub-ensembles from the candidate set $EnsSet$, removing these sub-ensembles with low diversity and obtaining the set $GEnsSet$ of high quality sub-ensembles with lower cost and better ensemble accuracy than the entire deep ensemble of $M$ member networks.

\begin{table*}[h!]
\centering
\caption{Example Deep Ensembles for CIFAR-10 and ImageNet}
\label{table:example-ensembles-cifar10-imagenet}
\small
\scalebox{0.9}{
\begin{tabular}{|c|c|c|c|c|c|c|c|c|c|c|}
\hline
Dataset & \multicolumn{5}{c|}{CIFAR-10} & \multicolumn{5}{c|}{ImageNet} \\ \hline
Ensemble Team & 0123456789 & 0123 & 01238 & 123 & 1234 & 0123456789 & 12345 & 2345 & 1234 & 124 \\ \hline
Ensemble Acc (\%) & 96.33 & \textbf{97.15} & \textbf{96.87} & \textbf{96.81} & \textbf{96.63} & 79.82 & \textbf{80.77} & \textbf{80.70} & \textbf{80.29} & \textbf{79.84} \\ \hline
Team size & 10 & 4 & 5 & 3 & 4 & 10 & 5 & 4 & 4 & 3 \\ \hline
Acc Improv (\%) & 0 & \textbf{0.82} & \textbf{0.54} & \textbf{0.48} & \textbf{0.30} & 0 & \textbf{0.95} & \textbf{0.88} & \textbf{0.47} & \textbf{0.02} \\ \hline
Cost Reduction (\%) & 0 & \textbf{60} & \textbf{50} & \textbf{70} & \textbf{60} & 0 & \textbf{50} & \textbf{60} & \textbf{60} & \textbf{70} \\ \hline

\end{tabular}
} 
\end{table*}

\begin{table}[h!]
\centering
\caption{All Individual Member Models for Four Benchmark Datasets}
\label{table:ens-base-model-pools}
\scalebox{0.86}{
\small
\begin{tabular}{|c|c|c|c|c|c|c|c|c|}
\hline
\multirow{2}{*}{Dataset} & \multicolumn{2}{c|}{CIFAR-10}               & \multicolumn{2}{c|}{ImageNet}               & \multicolumn{2}{c|}{Cora}                  & \multicolumn{2}{c|}{MNIST}                  \\ \cline{2-9}
                         & \multicolumn{2}{c|}{10,000 testing samples} & \multicolumn{2}{c|}{50,000 testing samples} & \multicolumn{2}{c|}{1,000 testing samples} & \multicolumn{2}{c|}{10,000 testing samples} \\ \hline
ID                 & Name                  & Acc (\%)           & Name                    & Acc (\%)         & Name                  & Acc (\%)          & Name                      & Acc (\%)       \\ \hline
0                        & \textbf{DenseNet190}           & \textbf{96.68}              & AlexNet                 & 56.63            & GCN                   & 81.70             & KNN                       & 94.23          \\ \hline
1                        & DenseNet100           & 95.46              & DenseNet                & 77.15            & GAT                   & 82.80             & Logistic Regression       & 91.89          \\ \hline
2                        & ResNeXt               & 96.23              & EfficientNet-B0         & 75.80            & SGC                   & 81.70             & Linear SVM                & 92.48          \\ \hline
3                        & WRN                   & 96.21              & ResNeXt50               & 77.40            & ARMA                  & 82.10             & \textbf{RBF SVM}                   & \textbf{96.31}          \\ \hline
4                        & VGG19                 & 93.34              & Inception3              & 77.25            & APPNP                 & 82.20             & Random Forest             & 95.91          \\ \hline
5                        & ResNet20              & 91.73              & \textbf{ResNet152}               & \textbf{78.25}            & APPNP1                & 83.80             & GBDT                      & 92.89          \\ \hline
6                        & ResNet32              & 92.63              & ResNet18                & 69.64            & APPNP2                & 88.70             & Neural Network            & 96.18          \\ \hline
7                        & ResNet44              & 93.10              & SqueezeNet              & 58.00            & \textbf{SplineCNN}             & \textbf{88.90}             &                           &                \\ \hline
8                        & ResNet56              & 93.39              & VGG16                   & 71.63            & SplineCNN1            & 88.30             &                           &                \\ \hline
9                        & ResNet110             & 93.68              & VGG19-BN                & 74.22            & SplineCNN2            & 88.50             &                           &                \\ \hline
MIN                      & ResNet20              & 91.73              & AlexNet                 & 56.63            & GCN/SGC               & 81.70             & Logistic Regression       & 91.89          \\ \hline
AVG                      &                       & 94.25              &                         & 71.60            &                       & 84.87             &                           & 94.27          \\ \hline
MAX                      & DenseNet190           & 96.68              & ResNet152               & 78.25            & SplineCNN             & 88.90             & RBF SVM                   & 96.31  \\ \hline       
\end{tabular}
}
\end{table}

\newcommand{\tabfigure}[2]{\raisebox{-0.5\height}{\includegraphics[#1]{#2}}}
\begin{table*}[h!]
\caption{{\small Examples on ImageNet and Top-3 Classification Confidence}}
\label{table:ens-prune-example-imagenet}
\scalebox{0.87}{
\centering
\small
\begin{tabular}{|c|c|c|c|}
\hline
Image & 
\tabfigure{width=0.27\textwidth}{vis/46426-image}
&
\tabfigure{width=0.27\textwidth}{vis/47592-image}
&
\tabfigure{width=0.27\textwidth}{vis/39750-image}
\\ [9ex] \hline
\makecell{Ground \\Truth Label} & ice cream & lemon & ski \\ \hline
\makecell{Accuracy (\%)\\$F_1 F_2 F_4$: 79.84\\(DenseNet: 77.15,\\EfficientNet-B0: 75.80,\\Inception3: 77.25)} &
\tabfigure{width=0.275\textwidth}{vis/46426-124}
&
\tabfigure{width=0.275\textwidth}{vis/47592-124}
&
\tabfigure{width=0.275\textwidth}{vis/39750-124}
\\ \hline
\makecell{Ensemble \\Output} & ice cream & lemon & ski \\ \hline
\end{tabular}
} 
\centering
\end{table*}

We show 5 example deep ensemble teams in Table~\ref{table:example-ensembles-cifar10-imagenet} for two datasets, CIFAR-10 and ImageNet. For each dataset, we list the entire deep ensemble (\verb|0123456789|) of 10 member networks in addition to the 4 high quality deep ensembles that are identified by our HQ ensemble pruning approach.
The 10 member networks in each entire ensemble for CIFAR-10 and ImageNet are given in Table~\ref{table:ens-base-model-pools}. The 4 sub-ensembles recommended by our HQ have much smaller team sizes with only 3$\sim$5 individual member networks and achieve better ensemble accuracy than the given entire ensemble of 10 networks, significantly reducing the ensemble execution cost by 50\%$\sim$70\% for both CIFAR-10 and ImageNet.
Table~\ref{table:ens-prune-example-imagenet} further presents 3 image examples from ImageNet and the prediction results with Top-3 classification confidence from the member networks of the sub-ensemble team \verb|124|. This ensemble team achieves higher accuracy than each individual member network. For each image, one member model makes a prediction error. But the ensemble team can still give the correct predictions by repairing the wrong predictions by its member networks. This sub-ensemble team of 3 member networks also outperforms the entire ensemble of 10 models on ImageNet.
For a given deep ensemble of size $M=10$, we have a total of 1012 possible sub-ensembles to be considered in ensemble pruning. It is challenging to design and develop an efficient ensemble pruning approach to identify high quality sub-ensembles to improve both ensemble accuracy and time and space efficiency of ensemble execution.

\noindent {\bf Problems with Baseline Ensemble Pruning.\/}
In related work, we discussed several recent studies that utilize deep ensembles to strengthen the robustness of individual deep neural network against adversarial attacks~\cite{ensemble-mass,ensemble-tdsc,ensemble-bigdata,kaho-sigkdd2021}. 
Most of these existing methods leverage Cohen's Kappa (CK)~\cite{cohenskappa} for measuring ensemble diversity since early studies~\cite{diversityaccuracy,analysis-diversity-measures,ensemble-pruning-primer,ensemblepruninganalysis} in the literature have shown that both pairwise and non-pairwise diversity metrics share similar diversity evaluation results with regard to ensemble predictive performance, including the CK, BD, KW and GD metrics mentioned in related work.
However, we show that these existing diversity metrics may not precisely capture the inherent failure independence among member networks of a deep ensemble team, which may not produce the optimal performance in guiding ensemble pruning.

We first study the baseline diversity metrics for ensemble pruning and analyze their inherent problems. For a given diversity metric, such as BD or GD, the baseline ensemble pruning method calculates the ensemble diversity scores for each candidate ensemble in $EnsSet$ using a set of random samples drawn from the validation set as suggested by~\cite{diversityaccuracy}. 
A practical method for pruning a given large ensemble of $M$ member networks follows three steps: (1) we first compute the ensemble diversity score for each candidate sub-ensemble by using CK, BD, KW or GD; (2) we calculate the mean diversity score as the pruning threshold; and (3) we choose these sub-ensembles in $EnsSet$ with their ensemble diversity scores above this pruning threshold and add them into the set $GEnsSet$ of high quality ensembles. The rest sub-ensembles will be pruned out given their lower ensemble diversity than the pruning threshold.

\begin{figure}[h!]
\centering
    \subfloat[\small{\textbf{GD}, Pruning 427 out of 1012}]{
    \centering
    \includegraphics[width=0.45\textwidth]{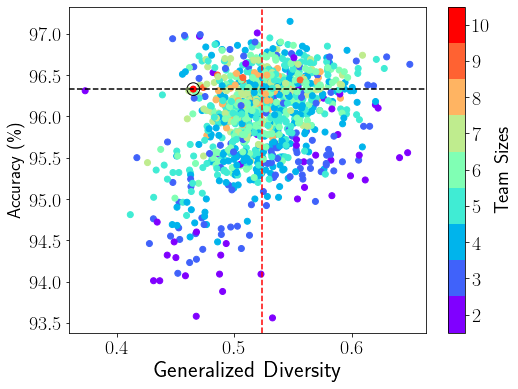}
    \label{fig:cifar10-q-gd-mean}
    }
    \subfloat[\small{\textbf{F-GD}, Pruning 517 out of 1012}]{
    \centering
    \includegraphics[width=0.45\textwidth]{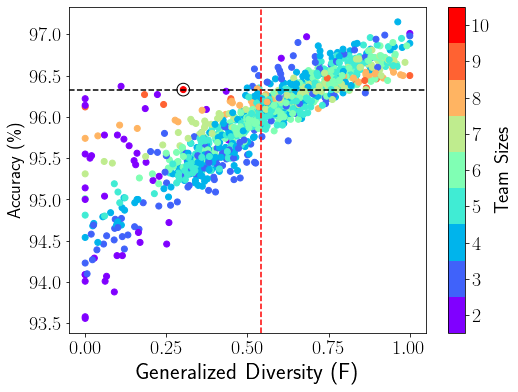}
    \label{fig:cifar10-f-gd-mean}
    }
    \caption{Pruning All Possible Deep Ensembles by The Mean Threshold on CIFAR-10: (a) Baseline Diversity (GD), (b) Focal Diversity (F-GD)}
    \label{fig:cifar10-mean}
\end{figure}

We compute the baseline GD scores and ensemble accuracy for all 1013 (1012 sub-ensembles $+$ 1 entire ensemble) deep ensembles on CIFAR-10 as shown in Figure~\ref{fig:cifar10-q-gd-mean}, where each dot denotes an ensemble team and each color marks the team size in the right color mapping bar. The baseline GD ensemble pruning approach utilizes the GD mean threshold of 0.524 (vertical red dashed line) to select sub-ensemble teams on the right of this diversity threshold line into $GEnsSet$. The ensemble accuracy 96.33\% of the entire ensemble (marked by the black circle) is represented by the horizontal black dashed line, which serves as the reference accuracy for evaluating ensemble pruning methods. Ideally, the ensemble pruning algorithms should identify high quality sub-ensembles with their ensemble accuracy above this reference accuracy.

For an entire ensemble of a large size $M$, we use four performance metrics to assess the ensemble pruning algorithms: (1) the accuracy range of the selected sub-ensembles in $GEnsSet$, (2) the precision, which measures the proportion of the selected sub-ensembles with equal or higher ensemble accuracy than the reference accuracy of the given entire ensemble, e.g., 96.33\% accuracy for CIFAR-10 and 79.82\% accuracy for ImageNet, among all selected sub-ensembles, (3) the recall, which measures the proportion of the selected sub-ensembles in $GEnsSet$ over all the sub-ensembles in $EnsSet$ whose ensemble accuracy is equal to or higher than the reference accuracy of the entire deep ensemble, and (4) the cost reduction, which measures the reduction in the ensemble team size of the selected sub-ensembles relative to the entire ensemble size $M$. For example, we show a small sub-ensemble, \texttt{123}, in Table~\ref{table:example-ensembles-cifar10-imagenet} with the cost reduction of 70\% ($=(M-S)/M=(10-3)/10$).
We evaluate the baseline mean threshold based ensemble pruning algorithm using the baseline BD (2nd column) and GD (4th column) diversity metrics on the CIFAR-10 dataset in Table~\ref{table:cifar10-mean} and ImageNet dataset in Table~\ref{table:imagenet-mean}. The BD-based baseline pruning suffers from low precision (6.18\% for CIFAR-10 and 8.53\% for ImageNet) and recall (11.72\% for CIFAR-10 and 23.15\% for ImageNet). Even though the GD-based baseline pruning has higher precision (36.90\% for CIFAR-10 and 17.74\% for ImageNet) and recall (63.10\% for CIFAR-10 and 46.31\% for ImageNet), which is still below the acceptable values, such as precision of over 50\%.
Moreover, the accuracy lower bound for both BD and GD based baseline ensemble pruning, i.e., 93.56\% on CIFAR-10 and 61.39\% (BD) and 70.79\% (GD) on ImageNet, is much lower than the reference accuracy of the entire ensemble team, i.e., 96.33\% accuracy for CIFAR-10 and 79.82\% accuracy for ImageNet.
We find similar observations on other diversity metrics, including CK and KW. This motivates us to investigate how to properly measure ensemble diversity and optimize the ensemble pruning process.

\begin{table}[h!]
\centering
\caption{Baseline Ensemble Pruning by Mean Threshold (CIFAR-10)}
\label{table:cifar10-mean}
\scalebox{0.9}{
\small
\begin{tabular}{|c|c|c|c|c|}
\hline
Methods & BD$>$0.339 (baseline) & F-BD$>$0.602 (optimized) & GD$>$0.524 (baseline) & F-GD$>$0.543 (optimized) \\ \hline
\begin{tabular}[c]{@{}c@{}}Acc Range (\%) \\ of $GEnsSet$  \end{tabular} & 93.56$\sim$96.72 & \textbf{95.71}$\sim$97.15 & 93.56$\sim$97.15 & \textbf{95.71}$\sim$97.15 \\ \hline
Precision (\%) & 6.18 & \textbf{63.53} & 36.90 & \textbf{53.90} \\ \hline
Recall (\%) & 11.72 & \textbf{95.51} & 63.10 & \textbf{97.59} \\ \hline
Cost Reduction & 10\%$\sim$80\% & 10\%$\sim$80\% & 10\%$\sim$80\% & 10\%$\sim$80\% \\ \hline
\end{tabular}
} 
\end{table}

\begin{table}[h!]
\centering
\caption{Baseline Ensemble Pruning by Mean Threshold (ImageNet)}
\label{table:imagenet-mean}
\scalebox{0.9}{
\small
\begin{tabular}{|c|c|c|c|c|}
\hline
Methods & BD$>$0.314 (baseline) & F-BD$>$0.632 (optimized) & GD$>$0.335 (baseline) & F-GD$>$0.594 (optimized) \\ \hline
\begin{tabular}[c]{@{}c@{}}Acc Range (\%) \\ of $GEnsSet$  \end{tabular} & 61.39$\sim$80.54 & \textbf{76.77}$\sim$80.77 & 70.79$\sim$80.60 & \textbf{76.41}$\sim$80.77 \\ \hline
Precision (\%)             & 8.53             & \textbf{41.93}            & 17.74            & \textbf{36.75}            \\ \hline
Recall (\%)                & 23.15            & \textbf{98.52}            & 46.31            & \textbf{99.01}            \\ \hline
Cost Reduction             & 10\%$\sim$80\%   & 10\%$\sim$80\%   & 10\%$\sim$80\%   & 10\%$\sim$80\%   \\ \hline
\end{tabular}
} 
\end{table}

\noindent {\bf Focal Diversity based Hierarchical Pruning.\/}
We improve the ensemble diversity measurement and ensemble pruning process with three novel techniques in this paper.
{\it First,} we use focal diversity metrics to optimize the ensemble diversity measurements, which utilizes the concept of focal models to sample negative samples and precisely capture the failure independence of member networks of a deep ensemble team.
{\it Second,} we introduce a novel hierarchical pruning approach, which leverages focal diversity metrics and iteratively prunes out subsets of redundant member networks with low diversity from the entire ensemble.
{\it Third,} we combine multiple focal diversity metrics through focal diversity consensus voting to further enhance the hierarchical ensemble pruning performance.

\section{Focal Diversity Based Hierarchical Pruning}\label{section:focal-diversity-hierarchical pruning}
\subsection{Focal Diversity Concept and Algorithm}~\label{section:focal-diversity-algorithm}
Our focal diversity metrics are designed to provide more accurate measurements of failure independence of the member networks in a deep ensemble team, including both pairwise (F-CK and F-BD) and non-pairwise (F-GD and F-KW) diversity metrics. 
Unlike traditional ensemble diversity evaluation approaches, which randomly select samples from the validation set to evaluate ensemble diversity, our focal diversity measurements draw random negative samples from $NegSampSet(F_f)$ based on a specific focal model $F_f$ on which the focal model $F_f$ makes prediction errors and then calculate the focal negative correlation score. For a given deep ensemble of $S$ member networks, each of the $S$ member networks will serve as the focal model ($F_f$) to draw negative samples. Therefore, we have a total of $S$ focal negative correlation scores, where each corresponds to a member network serving as the focal model. Then we obtain the focal diversity score for this deep ensemble of $S$ member networks through the (weighted) average of $S$ focal negative correlation scores.
The focal model concept is motivated by ensemble defense against adversarial attacks~\cite{ensemble-bigdata,ensemble-tdsc}, where the attack victim model is protected in a defense ensemble team. The focal model can be viewed as the victim model to evaluate the failure independence of other member models to this focal model in an ensemble team, i.e., the focal negative correlation score. Thus, the ensemble diversity score for this ensemble of size $S$ can be computed as the (weighted) average of $S$ focal negative correlation scores by taking each member model as a focal model to mitigate the bias in diversity evaluation using only one focal model.
Our preliminary results have demonstrated promising performance of the focal diversity metrics in capturing the failure independence of the member networks of a deep ensemble team~\cite{EnsembleBenchCogMI,EnsembleBenchCVPR,EnsembleBenchICDM}.

\begin{algorithm}[!h]
\caption{Focal Diversity Metric Calculation}
\label{alg:hq-diversity-metric}
\small
\begin{algorithmic}[1]
    \Procedure{getFQ}{$NegSampSet, Q, EnsSet$}
    \State \textbf{Input}: $NegSampSet$: negative sample sets for each focal model $F_f$; $Q$: the focal diversity metric, such as F-CK, F-BD, F-KW and F-GD; $EnsSet$: the set of candidate sub-ensemble teams to be considered;
    \State \textbf{Output}: $FQ$: focal diversity scores
    \State Initialize $D(Q)=\{\}$, $\overline{D}(Q)=\{\}$
    \State Initialize $FQ=\{\}$ \Comment{A map of diversity scores and teams}
    \For{$S=2$ to $M-1$} \label{alg:hq-s-start} 
        \For{$f = 0$ to $M-1$} \label{alg:hq-s-focal-start}
            \State Obtain $EnsSet(F_{f}, S)$ with candidate sub-ensembles of size $S$ and containing the focal model $F_{f}$.
            \State Initialize $D(Q, S, F_{f}) = \{\}$
            \For{$i=1$ to $|EnsSet(F_{f}, S)|$} \label{alg:hg-q-diversity-start}
                \LineComment{calculate the focal negative correlation score for $T_i \in EnsSet(F_{f}, S)$ using the definition of $Q$}
                \State $q_i = FocalNegativeCorrelation(Q, T_i, NegSampSet(F_{f}))$
                \State $D(Q, S, F_{f})$.append($q_i$)
            \EndFor \label{alg:hg-q-diversity-end}
            \For{$i=1$ to $|EnsSet(F_{f}, S)|$} \label{alg:hg-scale-start}
                \LineComment{scale focal negative correlation scores for sub-ensembles of the same size $S$}
                \State $\overline{D}(Q, S, F_{f}, T_i) = \frac{q_i - \min(D(Q, S, F_{f}))}{\max(D(Q, S, F_{f})) - \min(D(Q, S, F_{f}))}$ 
            \EndFor \label{alg:hg-scale-end}    
        \EndFor \label{alg:hq-s-focal-end}
        \State Obtain $EnsSet(S)$ with candidate sub-ensemble teams of size $S$
        \For{$i=1$ to $|EnsSet(S)|$} \label{alg:hq-avg-start}
            \State Initialize $tmpD=\{\}$
            \For{$j=1$ to $|T_i|$}
                \State $tmpD$.append($\overline{D}(Q, S, F_{f} = T_i[j], T_i)$)
            \EndFor
            \LineComment{Obtain the member model accuracy ranks as the member model weights.}
            \State $w = MemberModelAccuracyRank(T_i)$
            \State $FQ(T_i) = WeightedAverage(w, tmpD)$
        \EndFor \label{alg:hq-avg-end}
    \EndFor \label{alg:hq-s-end}
    \State \Return $FQ$
    \EndProcedure
\end{algorithmic}
\end{algorithm}

In general, our focal diversity metrics compare the ensemble diversity scores among the ensembles of the same size $S$, randomly draw negative samples from each focal model ($F_{f}$, i.e., $NegSampSet(F_f)$) and calculate the focal diversity score as the average of $S$ focal negative correlation scores by taking each member model as the focal model.
Algorithm~\ref{alg:hq-diversity-metric} gives a skeleton of calculating the focal diversity scores for all the candidate sub-ensembles in $EnsSet$.
For each team size $S$ (Line~\ref{alg:hq-s-start}$\sim$\ref{alg:hq-s-end}), we follow two general steps to calculate the focal diversity scores for each ensemble.
{\it First,} for each member model $F_{f}$, let $EnsSet(F_{f}, S)$ denote all candidate ensembles of size $S$, each containing the focal model $F_{f}$. We first compute the focal negative correlation score for each ensemble in $EnsSet(F_{f}, S)$ with the negative samples randomly drawn from the focal model $F_{f}$ ($NegSampSet(F_f)$) and store them in $D(Q, S, F_{f})$ (Line~\ref{alg:hg-q-diversity-start}$\sim$\ref{alg:hg-q-diversity-end}). Then, in order to make them comparable across different member models ($F_{f}$), we scale $D(Q, S, F_{f})$ into $[0, 1]$ and store them in $\overline{D}(Q, S, F_{f}, T_i)$ for each ensemble team $T_i$ (Line~\ref{alg:hg-scale-start}$\sim$\ref{alg:hg-scale-end}).
{\it Second,} for each candidate sub-ensemble ($T_i$) of size $S$, we perform a weighted average of the scaled focal negative correlation scores $\overline{D}(Q, S, F_{f} = T_i[j], T_i)$ associated with each of its focal (member) model $F_{f}=T_i[j]$ to obtain the focal diversity score. The weight is calculated with the corresponding rank of the accuracy of the member model ($T_i[j]$) in the ensemble team ($T_i$), i.e., the member models with higher accuracy will have higher weights (Line~\ref{alg:hq-avg-start}$\sim$\ref{alg:hq-avg-end}). In addition, we subtract the CK value from 1 when using the CK formula~\cite{cohenskappa} to present the consistent view that high diversity values correspond to high ensemble diversity.

We show a visual comparison between our focal diversity metric F-GD in Figure~\ref{fig:cifar10-f-gd-mean} and the baseline GD metric in Figure~\ref{fig:cifar10-q-gd-mean} for CIFAR-10. Here the baseline mean threshold based ensemble pruning method is used for both F-GD and GD diversity metrics. Compared to the GD based ensemble pruning, the F-GD powered pruning obtains better ensemble pruning results and identifies a larger portion of sub-ensembles (on the right side of the vertical red dashed line) with ensemble accuracy above the reference accuracy of 96.33\% (horizontal black dashed line).

When the focal diversity metrics are used in our hierarchical pruning algorithm with a desired team size $S_d$, we only calculate the diversity scores for the ensembles of size $2\sim S_d$ by replacing $M-1$ with $S_d$ in Line~\ref{alg:hq-s-start} of Algorithm~\ref{alg:hq-diversity-metric} to reduce the computation cost. Compared to the baseline diversity metrics, our focal diversity score computation takes about 1.2$\sim$2.4$\times$ the time of the baseline diversity score computation for all candidate sub-ensembles in $EnsSet$. The computation of all focal diversity metrics can be finished in several seconds on MNIST, CIFAR-10 and Cora and in several minutes on ImageNet (see Table~\ref{table:pruning-execution-time}). Given that our focal diversity metrics significantly outperform the baseline diversity metrics in capturing the failure independence of a group of member models, it is beneficial and worthwhile to use our focal diversity metrics to measure the ensemble diversity and identify high quality sub-ensembles. In addition, our hierarchical pruning can effectively prune out about 80\% of the ensemble teams in $EnsSet$, which also compensates for the increased computation time of the focal diversity metrics, making our entire solution very efficient.

We show the baseline mean threshold based ensemble pruning results using our focal diversity F-BD and F-GD in the 3rd and 5th columns of Table~\ref{table:cifar10-mean} for CIFAR-10 and Table~\ref{table:imagenet-mean} for ImageNet. Both F-BD and F-GD (optimized) substantially outperform the baseline BD and GD based ensemble pruning, as measured by the accuracy range of the selected sub-ensembles in $GEnsSet$, precision and recall. For the other two diversity metrics, CK and KW, we observed similar performance improvements of using our focal diversity metrics, F-CK and F-KW, in the baseline ensemble pruning. Even though our focal diversity metrics can significantly improve the baseline mean threshold based ensemble pruning method, achieving very high recall of over 95\%, we can still enhance the 63.53\% precision of F-BD and 53.90\% precision of F-GD for CIFAR-10 and the 41.93\% precision of F-BD and 36.75\% precision of F-GD for ImageNet to more accurately identify high-quality sub-ensembles. In addition, the baseline mean threshold based ensemble pruning examines every sub-ensemble in $EnsSet$ with high computational cost. These observations have motivated us to explore new methods to enhance ensemble pruning efficiency and accuracy.

\begin{algorithm}[!h]
\caption{Hierarchical Pruning}
\label{alg:alpha-filter}
\small
\begin{algorithmic}[1]
    \Procedure{hq-pruning}{$Q, S_d, \beta, EnsSet$}
    \State \textbf{Input}: $Q$: the focal diversity metric; $S_d$: the desired ensemble size; $\beta$: the percentage of the number of ensemble teams to be pruned out in each iteration; $EnsSet$: the set of sub-ensemble teams to be considered;
    \State \textbf{Output}: $GEnsSet(Q, S_d)$: the set of good ensemble teams of size $S_d$ identified by the focal diversity metric $Q$.
    \State Initialize $pruneSet = \{\}$ \Comment{Subsets of member models to prune out.}
    \For{$S=2$ to $S_d$}
        \State Initialize $GEnsSet(Q, S)=\{\}$, $D=\{\}$
        \State Construct $EnsSet(S)$ of size $S$ ensembles \label{alg:alpha-filter-prune-start}
        \For{$i=1$ to $|EnsSet(S)|$}
            \If{$T_i$ contains any subset in $pruneSet$}
                \State continue \Comment{Prune out this sub-ensemble $T_i$ and avoid expanding along this branch.}
            \Else
                \State $q_i=FQ(T_i)$ \Comment{$FQ(T_i)$ is obtained through Algorithm~\ref{alg:hq-diversity-metric} by using the focal diversity $Q$.}
                \State $D$.append($q_i$)
                \State $GEnsSet(Q, S)$.add($T_i$)
            \EndIf
        \EndFor \label{alg:alpha-filter-prune-end}
        \State $n = \beta \times |GEnsSet(Q, S)|$ \label{alg:alpha-filter-beta-start}
        \State Sort $T_i \in GEnsSet(Q, S)$ by $q_i \in D$
        \State Remove $n$ sub-ensembles of the lowest ensemble diversity from $GEnsSet(Q, S)$ and add them into $pruneSet$ \label{alg:alpha-filter-beta-end}
    \EndFor
    \State \Return $GEnsSet(Q, S_d)$
    \EndProcedure
\end{algorithmic}
\end{algorithm}

\subsection{Hierarchical Pruning Overview and Algorithm}
Our focal diversity metrics reveal some anti-monotonicity property: a superset of a low diversity ensemble has also low diversity. Specifically, a low focal diversity score for an ensemble team, e.g., $F_0 F_2$, often implies insufficient ensemble diversity, that is a high correlation of its member models in making similar prediction errors, making the member models highly redundant for a large superset ensemble team, such as $F_0F_1F_2$. Hence, those large ensembles that contain a small sub-ensemble with low diversity (e.g., $F_0 F_2$), such as $F_0 F_1 F_2$, $F_0 F_2 F_3$, $F_0 F_1 F_2 F_3$, and $F_0 F_1 F_2 F_4$ tend to have lower ensemble diversity than other ensembles with the same size and more diverse member models. Therefore, when we identify a sub-ensemble with low ensemble diversity, these large ensembles that are the supersets of this sub-ensemble can be preemptively pruned out, which motivates our hierarchical ensemble pruning approach.
The pseudo code is given in Algorithm~\ref{alg:alpha-filter}. 
Overall, our hierarchical pruning is an iterative process of composing and selecting deep ensembles for a desired ensemble team size $S_d$.
{\it First,} we start the hierarchical pruning process with the ensembles of size $S=2$, that is $|EnsSet(S=2)|=\binom{M}{2}=10(10-1)/2=45$ candidate ensemble teams.
{\it Second,} for a given focal diversity metric, we rank candidate ensembles of size $S$, such as $S=2$, by their focal diversity scores ($q_i$), prune out the bottom $\beta$ percentage of ensembles with low focal diversity scores, and add them into $pruneSet$ as the pruning targets (Line~\ref{alg:alpha-filter-beta-start}$\sim$\ref{alg:alpha-filter-beta-end}). Here, a dynamic $\beta$ can be configured to accommodate the concrete number of candidate ensembles and diversity score distribution. A conservative strategy is recommended to set a small $\beta$, e.g., by default $\beta=10\%$.
{\it Third,} we preemptively prune out all subsequent ensemble teams with a larger size $S+1$ that contain one or more pruned sub-ensembles in $pruneSet$ (Line~\ref{alg:alpha-filter-prune-start}$\sim$\ref{alg:alpha-filter-prune-end}). By iterating through these pruning steps, our hierarchical pruning can efficiently identify high quality sub-ensembles of the desired team size $S_d$ and add them into $GEnsSet(Q, S_d)$.

\begin{figure}[h!]
\centering
    \includegraphics[width=0.9\linewidth]{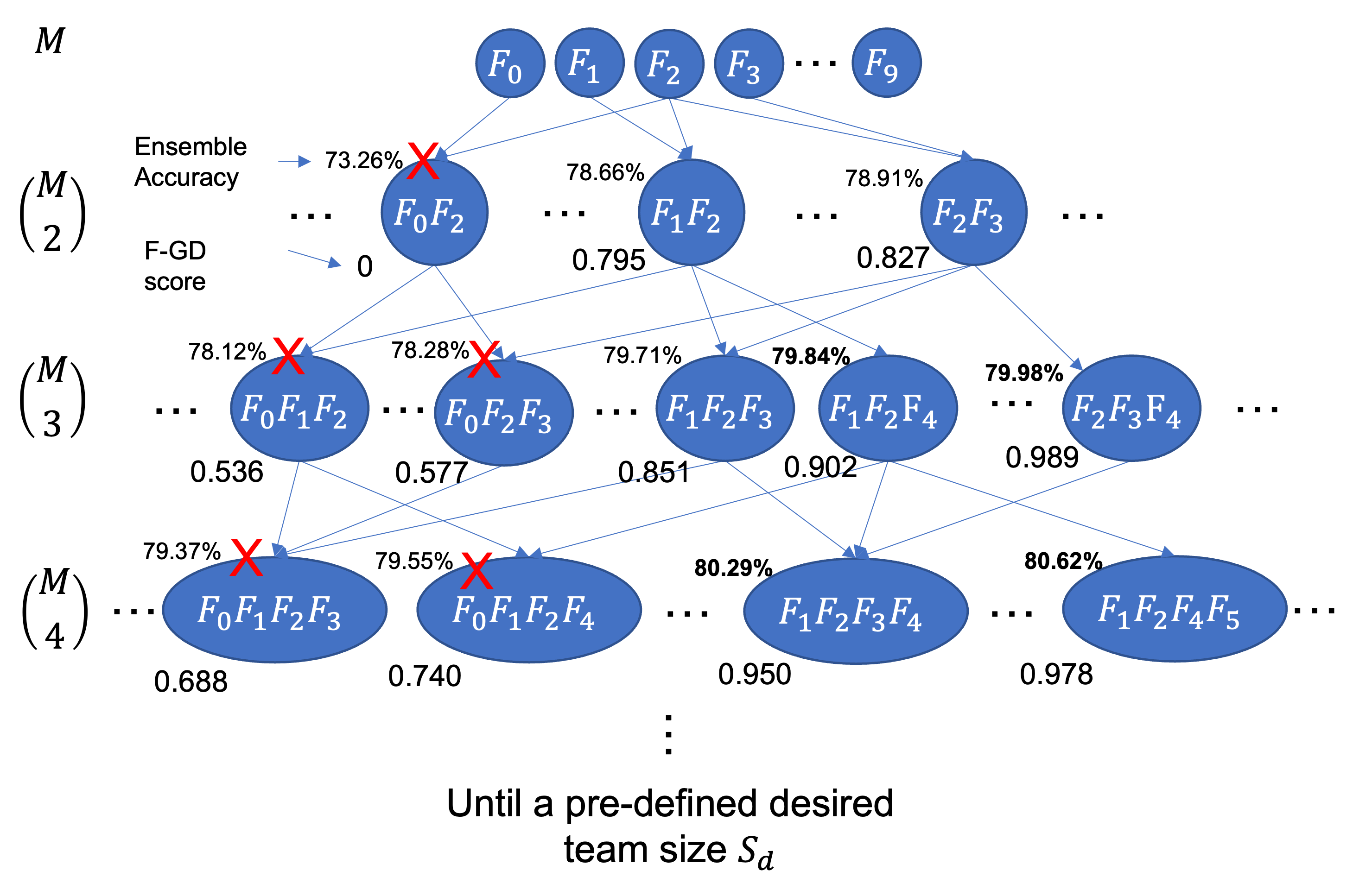}
    \caption{Hierarchical Pruning On ImageNet}
    \label{fig:hierarchical-pruning}
\end{figure}

We present a hierarchical ensemble pruning example on ImageNet in Figure~\ref{fig:hierarchical-pruning}. With $M=10$, we have all 10 member networks on the top, followed by all sub-ensembles of size $S=2$. For each tier, we add one additional network to the candidate ensemble teams of size $S~(2\le S < S_d)$ and place these extended ensembles of size $S+1$ in the next tier until $S=S_d$, where the ensembles of the desired team size $S_d$ will be placed in the last tier. Meanwhile, for each pruned ensemble, such as $F_0F_2$, our hierarchical pruning algorithm will preemptively cut off these branches of candidate ensemble teams that contain this pruned ensemble, i.e., all ensembles that are supersets of $F_0F_2$, marked by the red cross in Figure~\ref{fig:hierarchical-pruning}. This way of building ensemble teams allows us to compose high quality sub-ensembles and strategically remove the ensembles with low diversity. As the example in Figure~\ref{fig:hierarchical-pruning} shows that our hierarchical pruning indeed can effectively prune out these ensemble teams with low diversity and low accuracy, such as $F_0F_1F_2$, $F_0F_2F_3$, $F_0F_1F_2F_3$ and $F_0F_1F_2F_4$, and avoid exploring those unpromising branches.

We first apply our F-BD and F-GD hierarchical ensemble pruning algorithms on CIFAR-10 to prune the entire deep ensemble of 10 member networks by setting $\beta=10\%$ and $S_d=5$. Figure~\ref{fig:cifar10-hq-a-pruning-bd-gd} visualizes the ensemble pruning results, where the black and red dots denote the pruned and selected ensemble teams respectively. Three interesting observations should be highlighted.
{\it First,} for our focal diversity metrics, including F-BD and F-GD, the ensemble teams with high focal diversity scores tend to have high ensemble accuracy, especially when we compare the ensemble teams of the same size $S$, e.g., $S$=3, 4, 5, which is an important property to guide ensemble pruning.
{\it Second,} our hierarchical ensemble pruning approach can effectively identify these ensemble teams with insufficient ensemble diversity by their low focal diversity scores. The effectiveness of our hierarchical pruning can be attributed to two factors. On the one hand, our focal diversity powered hierarchical pruning encourages more fair comparison of focal diversity scores among these ensembles of the same size $S$. 
Comparing to Figure~\ref{fig:cifar10-f-gd-mean} showing all possible ensembles of mixed team sizes, Figure~\ref{fig:cifar10-hq-a-pruning-bd-gd} shows much clearer correlation between the focal diversity score and ensemble accuracy for the ensembles of the same size $S=3, ~4,~5$. On the other hand, our focal diversity metrics can more precisely capture the failure independence of member networks of a deep ensemble with some anti-monotonicity property, ensuring that the focal diversity based ensemble pruning is highly accurate and efficient.
{\it Third,} the time and space complexity of the hierarchical pruning algorithm and pruned ensemble execution cost can be ultimately bounded by the desired team size $S_d$. For example, we recommend setting the desired ensemble team size $S_d$ up to $50\% \times M$, which allows our hierarchical pruning to effectively find these sub-ensembles that offer on par or improved ensemble accuracy over the entire ensemble of $M$ member networks and have a substantially smaller team size, such as one third or one half of $M$ with a significant cost reduction in ensemble execution. 
{\it Finally,} for Figure~\ref{fig:cifar10-hq-a-pruning-bd-t5} and~\ref{fig:cifar10-hq-a-pruning-gd-t5}, all the selected sub-ensembles (red dots) with the desired team size $S_d=5$ provide higher ensemble accuracy than the reference accuracy of 96.33\% for CIFAR-10, which leads to 100\% precision of our hierarchical pruning with F-BD and F-GD. We observe similar ensemble pruning results using the other two focal diversity metrics, F-CK and F-KW.

\begin{figure*}[h!]
\centering
    \subfloat[\small{S=3, (Pruning 35 Out of 120)}]{
    \centering
    \includegraphics[width=0.32\textwidth]{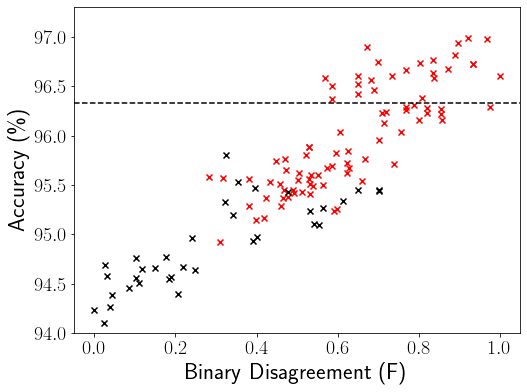}
    \label{fig:cifar10-hq-a-pruning-bd-t3}
    }
    \subfloat[\small{S=4, (Pruning 124 Out of 210)}]{
    \centering
    \includegraphics[width=0.32\textwidth]{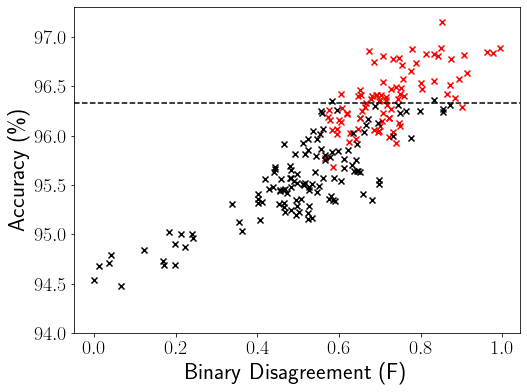}
    \label{fig:cifar10-hq-a-pruning-bd-t4}
    }
    \subfloat[\small{S=5, (Pruning 216 Out of 252)}]{
    \centering
    \includegraphics[width=0.32\textwidth]{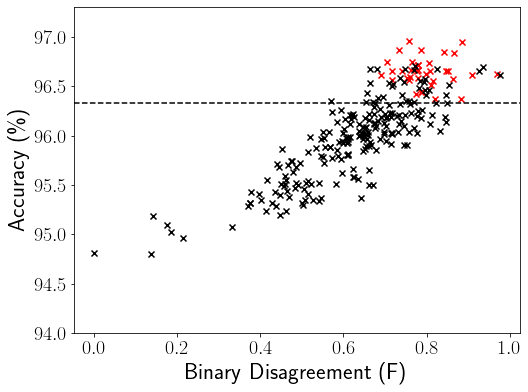}
    \label{fig:cifar10-hq-a-pruning-bd-t5}
    }\\
    \subfloat[\small{S=3, (Pruning 35 Out of 120)}]{
    \centering
    \includegraphics[width=0.32\textwidth]{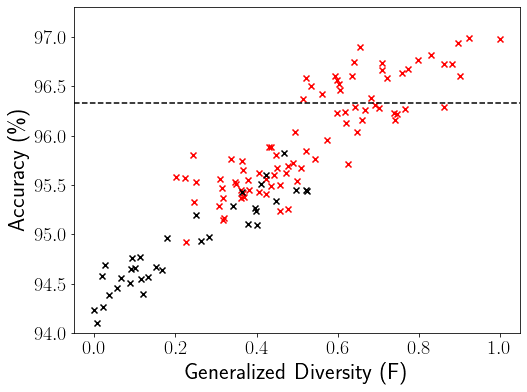}
    \label{fig:cifar10-hq-a-pruning-gd-t3}
    }
    \subfloat[\small{S=4, (Pruning 124 Out of 210)}]{
    \centering
    \includegraphics[width=0.32\textwidth]{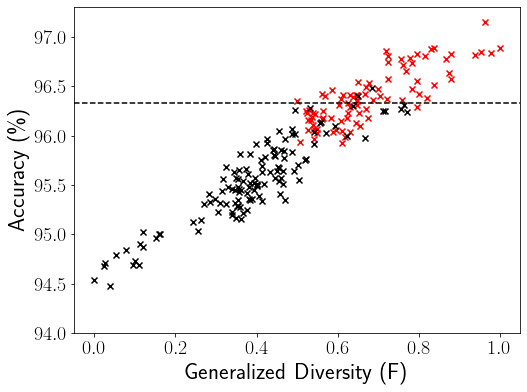}
    \label{fig:cifar10-hq-a-pruning-gd-t4}
    }
    \subfloat[\small{S=5, (Pruning 215 Out of 252)}]{
    \centering
    \includegraphics[width=0.32\textwidth]{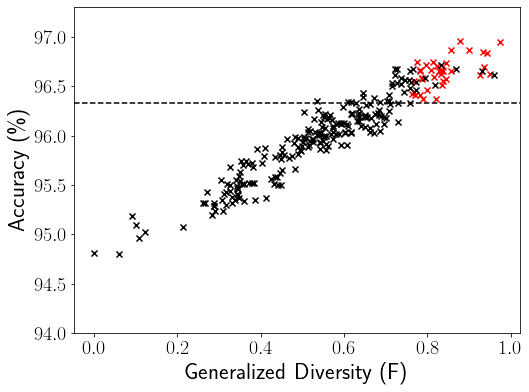}
    \label{fig:cifar10-hq-a-pruning-gd-t5}
    }
    \caption{Deep Ensembles of Size $S=3,4,5$ on CIFAR-10: top three figures for {\bf F-BD} and bottom three figures for {\bf F-GD} ($\beta=10\%, S_d=5$)}
    \label{fig:cifar10-hq-a-pruning-bd-gd}
\end{figure*}

\noindent \textbf{Focal Diversity Pruning by Focal Diversity Consensus Voting.\/}
Using different focal diversity metrics in the hierarchical ensemble pruning may produce different sets of selected ensembles ($GEnsSet$). It is observed that these ensemble teams that are chosen by the majority of our focal diversity metrics in ensemble pruning (e.g., F-BD pruning, F-KW pruning, F-GD pruning, etc.) tend to consistently outperform the entire deep ensemble in terms of the ensemble accuracy and execution cost. Therefore, we introduce the third step in our hierarchical ensemble pruning, which consolidates these ensembles that are selected by different focal diversity metrics into a majority voting based final selection. Concretely, an ensemble team will only be added to this final selection if it is chosen by the majority of focal diversity metrics, i.e., at least three focal diversity metrics, via hierarchical pruning. This third step further refines the focal diversity powered hierarchical pruning results and consistently delivers enhanced precision and robustness in ensemble pruning.

\section{Formal Analysis}
Deep neural network ensembles use multiple deep neural networks to form a team to collaborate and combine the predictions of individual member networks to make the final prediction. It is a commonly held view that an ensemble team consisting of diverse member networks has high prediction performance~\cite{ensemble-pruning-primer,ensemblepruninganalysis,effectivepruning}. Our focal diversity based ensemble pruning methods can efficiently identify small pruned deep ensembles with highly diverse member networks. These pruned deep ensembles can achieve the same or even improved prediction performance with significantly reduced space and time costs compared to the entire deep ensemble. In this section, we present a formal analysis to show the desired properties and features of these pruned deep ensembles to further demonstrate the effectiveness of our proposed methods.

\subsection{Diversity by Uncorrelated Errors}\label{section:uncorrelated-error}
For an ensemble team of size $S$, following~\cite{errorcorrelation,EnsembleBenchCVPR}, we can derive the added error for its ensemble prediction $E_{add}^{avg}$ using model averaging ($avg$) as the ensemble consensus method as Formula~(\ref{formula:uncorrelated-error-ensemble-added-error}) shows.
\begin{equation}
\small
E_{add}^{avg} = E_{add} (\frac{1 + (S-1) \delta)}{S})
\label{formula:uncorrelated-error-ensemble-added-error}
\end{equation}
where $E_{add}$ is the added error of a single network and $\delta$ is the expected average correlation of all member networks in the ensemble. Therefore, the ideal scenario is when all member networks in an ensemble team of size $S$ are diverse. They can learn and predict with uncorrelated errors (failure independence), i.e., $\delta \le 0$. Then a simple model averaging method can significantly reduce the overall prediction error by at least $S$ times. Meanwhile, the worst scenario happens when errors of individual networks are highly correlated with $\delta=1$. For example, when all $S$ member networks are perfect duplicates, the error of the ensemble is identical to the initial error without any improvement. In practice, given that it is challenging to directly measure the correlation $\delta$, many ensemble diversity metrics are proposed to quantify the correlation among member networks of an ensemble team. Our focal diversity metrics can significantly improve ensemble diversity measurement, and they are closely correlated to the ensemble prediction performance, which can be directly leveraged for identifying high-quality ensemble teams.

\subsection{Ensemble Robustness}\label{section:ensemble-robustness}
The deep neural network model is typically trained to minimize a cross-entropy loss and output a probability vector to approximate posteriori probabilities for the corresponding classes. Let $f_i(\textbf{x})$ denote the \textit{i}th element in the probability vector of a classifier $F$ for input $\textbf{x}$, which predicts the probability of class $i$. The classifier $F$ will output the predicted label $c$ with the highest probability for input $\textbf{x}$, that is $c=argmax_{1\le i \le C}f_i(\textbf{x})$.

The robustness of a classifier $F$ can be assessed based on its ability to maintain consistent prediction for a given input $\textbf{x}$ under input perturbation ($\mu$), such as the noise from adversarial samples. 
When the magnitude of the input perturbation $\mu$ remains within a certain bound, the prediction made by the classifier $F$ remains unaffected. We can leverage such a bound to compare the level of robustness of different classifiers, where a higher bound indicates a higher level of robustness. In Theorem~\ref{theorem:robustness-bound}, we introduce the concept of robustness bound ($R$) and formally show that this bound fulfills the aforementioned requirement.

\begin{theorem}[Robustness Bound ($R$)]\label{theorem:robustness-bound}
The robustness bound ($R$) for a classifier $F$ can be defined as the following Formula~(\ref{formula:lipschitz-inequality-6}),
\begin{equation}
\small
R = min_{j \ne c}\frac{f_c(\textbf{x}_0)-f_j(\textbf{x}_0)}{max_{\textbf{x}}||\nabla (f_c(\textbf{x}) - f_j(\textbf{x}))||_{q}}
\label{formula:lipschitz-inequality-6}
\end{equation}
where $\textbf{x} = \textbf{x}_0 + \mu$.
When the magnitude of the perturbation $\mu$ on the input $\textbf{x}_0$ is limited by $||\mu||_{p} \le R$ and $p,q$ satisfy $\frac{1}{p} + \frac{1}{q} = 1$ and $1\le p,q \le \infty$, the predicted labels for $\textbf{x}$ and $\textbf{x}_0$ by the classifier $F$ will be identical, indicating the input perturbation ($\mu$) will not change the prediction.
\end{theorem}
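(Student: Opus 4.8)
The plan is to reduce the theorem to a single statement: the class $c$ that attains $c=\mathrm{argmax}_{1\le i\le C} f_i(\textbf{x}_0)$ still attains the maximum at every perturbed point $\textbf{x}=\textbf{x}_0+\mu$ with $||\mu||_p\le R$. For each competing class $j\neq c$ I would introduce the margin function $g_j(\textbf{x})=f_c(\textbf{x})-f_j(\textbf{x})$, which satisfies $g_j(\textbf{x}_0)>0$ by the definition of $c$. It then suffices to prove $g_j(\textbf{x})\ge 0$ for all $j\neq c$, since that keeps $f_c(\textbf{x})\ge f_j(\textbf{x})$ for every $j$ and hence leaves the predicted label unchanged.

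First I would express the first-order increment of $g_j$ along the segment from $\textbf{x}_0$ to $\textbf{x}$ by the fundamental theorem of calculus, $g_j(\textbf{x})-g_j(\textbf{x}_0)=\int_0^1 \langle \nabla g_j(\textbf{x}_0+t\mu),\,\mu\rangle\, dt$. Next I would bound the integrand with H\"older's inequality for the conjugate pair $(p,q)$: $|\langle \nabla g_j(\textbf{x}_0+t\mu),\mu\rangle| \le ||\nabla g_j(\textbf{x}_0+t\mu)||_q\,||\mu||_p \le L_j\,||\mu||_p$, where $L_j := \max_{\textbf{x}} ||\nabla (f_c(\textbf{x})-f_j(\textbf{x}))||_q$ is precisely the denominator in the definition of $R$. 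Integrating over $t\in[0,1]$ yields $|g_j(\textbf{x})-g_j(\textbf{x}_0)|\le L_j||\mu||_p$. Then I would substitute the hypothesis $||\mu||_p\le R$ together with the inequality $R\le g_j(\textbf{x}_0)/L_j$, which holds for every $j\neq c$ because $R$ is the minimum over $j$ of exactly these ratios. This gives $|g_j(\textbf{x})-g_j(\textbf{x}_0)|\le L_j R \le g_j(\textbf{x}_0)$, so $g_j(\textbf{x})\ge g_j(\textbf{x}_0)-g_j(\textbf{x}_0)=0$, i.e. $f_c(\textbf{x})\ge f_j(\textbf{x})$. Taking this over all $j\neq c$ shows $c\in\mathrm{argmax}_i f_i(\textbf{x})$, which is the assertion.

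The main obstacle is not the inequality chain but the regularity and domain bookkeeping. I need $f_c$ and $f_j$ to be differentiable (or at least absolutely continuous) along the segment $[\textbf{x}_0,\textbf{x}]$ for the integral representation to be valid; for piecewise-linear networks this holds off a measure-zero set and can be handled by an a.e.-differentiability / Clarke-subgradient argument restricted to the line, which I would spell out. I would also clarify that although $L_j$ is written as a global maximum over the input space, only domination of $||\nabla g_j||_q$ on the segment is actually used, so the global maximum is a sound (if loose) over-estimate. Finally, I would flag the tie-breaking subtlety: the chain of inequalities produces $f_c(\textbf{x})\ge f_j(\textbf{x})$ rather than a strict inequality, so ``identical predictions'' should be read either under a fixed deterministic tie-break rule, or with strict inequality by assuming the $\mathrm{argmax}$ at $\textbf{x}_0$ is unique and taking $||\mu||_p<R$.
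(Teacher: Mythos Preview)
Your proposal is correct and follows essentially the same argument as the paper: define the margin $g_j=f_c-f_j$, bound $|g_j(\textbf{x}_0+\mu)-g_j(\textbf{x}_0)|$ by $L_j\|\mu\|_p$ with $L_j=\max_{\textbf{x}}\|\nabla g_j(\textbf{x})\|_q$, and then use $\|\mu\|_p\le R\le g_j(\textbf{x}_0)/L_j$ to force $g_j(\textbf{x})\ge 0$ for every $j\neq c$. The only cosmetic difference is that the paper invokes the Lipschitz inequality as a cited lemma, whereas you derive it explicitly via the line integral and H\"older; your extra remarks on regularity and tie-breaking are more careful than what the paper records.
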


We first introduce Lemma~\ref{lemma:lipschitz-continuity} on Lipschitz continuity below and then show the formal proof of Theorem~\ref{theorem:robustness-bound}.

\begin{lemma}[Lipschitz Continuity]\label{lemma:lipschitz-continuity}
If $g(\textbf{x})$ is Lipschitz continuous, according to~\cite{lipschitz}, the following inequality~(\ref{formula:lipschitz-inequality}) holds:
\begin{equation}
\small
|g(\textbf{x}) - g(\textbf{y})| \le L_{q}||\textbf{x}-\textbf{y}||_{p}
\label{formula:lipschitz-inequality}
\end{equation}
where \textbf{x} and \textbf{y} are inputs, Lipschitz constant $L_{q} = max_{\textbf{x}}||\nabla g(\textbf{x})||_{q}$, $\frac{1}{p} + \frac{1}{q} = 1$, and $1\le p,q \le \infty$.
\end{lemma}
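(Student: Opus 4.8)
The plan is to establish the inequality by integrating the gradient of $g$ along the straight line segment joining $\textbf{y}$ to $\textbf{x}$ and then controlling the resulting inner product with H\"older's inequality. Throughout I assume $g$ is differentiable, so that $\nabla g$ and hence the constant $L_{q} = max_{\textbf{x}}||\nabla g(\textbf{x})||_{q}$ are well defined. \textbf{First}, I would parameterize the segment as $\textbf{z}(t) = \textbf{y} + t(\textbf{x} - \textbf{y})$ for $t \in [0,1]$ and introduce the scalar function $\phi(t) = g(\textbf{z}(t))$. By the fundamental theorem of calculus, $g(\textbf{x}) - g(\textbf{y}) = \phi(1) - \phi(0) = \int_0^1 \phi'(t)\, dt$, while the chain rule gives $\phi'(t) = \nabla g(\textbf{z}(t)) \cdot (\textbf{x} - \textbf{y})$. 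Combining the two yields the integral representation
\begin{equation}
\small
g(\textbf{x}) - g(\textbf{y}) = \int_0^1 \nabla g(\textbf{z}(t)) \cdot (\textbf{x} - \textbf{y})\, dt.
\end{equation}

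\textbf{Next}, I would take absolute values, move them inside the integral via the triangle inequality for integrals, and bound the integrand pointwise. The decisive step is H\"older's inequality applied to the inner product, giving $|\nabla g(\textbf{z}(t)) \cdot (\textbf{x} - \textbf{y})| \le ||\nabla g(\textbf{z}(t))||_{q}\, ||\textbf{x} - \textbf{y}||_{p}$; this is exactly where the conjugacy condition $\frac{1}{p} + \frac{1}{q} = 1$ with $1 \le p,q \le \infty$ is indispensable, since it is what makes the $\ell_q$ gradient norm the dual of the $\ell_p$ perturbation norm. Bounding $||\nabla g(\textbf{z}(t))||_{q} \le max_{\textbf{x}}||\nabla g(\textbf{x})||_{q} = L_{q}$ uniformly in $t$ and integrating the resulting constant over $[0,1]$ then collapses the integral, producing $|g(\textbf{x}) - g(\textbf{y})| \le L_{q}\, ||\textbf{x} - \textbf{y}||_{p}$, which is the claimed inequality~(\ref{formula:lipschitz-inequality}).

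The main obstacle is the H\"older step: one must recognize that the natural dual of the $\ell_p$ norm measuring $\textbf{x} - \textbf{y}$ is the $\ell_q$ norm measuring the gradient, and that the hypothesis $\frac{1}{p} + \frac{1}{q} = 1$ is precisely the assumption licensing this pairing, including the limiting endpoints $p=1, q=\infty$ and $p=\infty, q=1$ where the duality must be interpreted as a supremum rather than a genuine maximum. Everything else---the segment parameterization, the chain rule, and the integration of a constant over a unit interval---is routine. I would further remark that this lemma is designed to be instantiated in Theorem~\ref{theorem:robustness-bound} with $g = f_c - f_j$, so the same dual coupling of $p$ and $q$ carried by the conjugacy relation reappears directly in the robustness bound $R$.
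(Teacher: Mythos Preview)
Your argument is correct and is in fact the standard proof of this inequality: the line-segment parameterization together with the fundamental theorem of calculus gives the integral representation, H\"older's inequality with the conjugacy $\frac{1}{p}+\frac{1}{q}=1$ bounds the inner product, and taking the supremum of $\|\nabla g\|_q$ over the segment (hence a fortiori over all $\textbf{x}$) yields the claimed constant $L_q$.

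There is nothing to compare against, however, because the paper does not supply its own proof of this lemma. It is stated with the attribution ``according to~\cite{lipschitz}'' and used as a black box; the proof environment that immediately follows in the paper is for Theorem~\ref{theorem:robustness-bound}, not for the lemma. So your proposal simply fills in what the paper delegates to the cited reference, and your closing remark about the instantiation $g=f_c-f_j$ matches exactly how the lemma is consumed in that subsequent proof.
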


\begin{proof}[Proof of Theorem~\ref{theorem:robustness-bound}]

Without loss of generality, we assume $g(\textbf{x}) = f_c(\textbf{x}) - f_j(\textbf{x})$ is Lipschitz continuous with Lipschitz constant $L_{q}^{j}$ and $j \ne c$. Following Lemma~\ref{lemma:lipschitz-continuity}, let $\textbf{x} = \textbf{x}_0 + \mu$ and $\textbf{y}=\textbf{x}_0$, we derive Formula~(\ref{formula:lipschitz-inequality-1}) below,
\begin{equation}
\small
|g(\textbf{x}_0 + \mu) - g(\textbf{x}_0)| \le L_{q}^{j}||\mu||_{p}
\label{formula:lipschitz-inequality-1}
\end{equation}
where $L_{q}^{j} = max_{\textbf{x}}||\nabla g(\textbf{x})||_{q} =max_{\textbf{x}}||\nabla (f_c(\textbf{x}) - f_j(\textbf{x}))||_{q}$, $\frac{1}{p} + \frac{1}{q} = 1$, and $1\le p,q \le \infty$. 

Formula~(\ref{formula:lipschitz-inequality-1}) can be rearranged as Formula~(\ref{formula:lipschitz-inequality-2}),
\begin{equation}
\small
g(\textbf{x}_0) - L_{q}^{j}||\mu||_{p} \le g(\textbf{x}_0 + \mu) \le g(\textbf{x}_0) + L_{q}^{j}||\mu||_{p}
\label{formula:lipschitz-inequality-2}
\end{equation}
when $g(\textbf{x}_0 + \mu) < 0$, the predicted class label will change. However, $g(\textbf{x}_0 + \mu)$ is lower bounded by $g(\textbf{x}_0) - L_{q}^{j}||\mu||_{p} \le g(\textbf{x}_0 + \mu)$. If  $0 \le g(\textbf{x}_0) - L_{q}^{j}||\mu||_{p}$, we have $g(\textbf{x}_0 + \mu) \ge 0$ to ensure that the prediction will not change with the small perturbation $\mu$ on the input $\textbf{x}_0$. This leads to Formula~(\ref{formula:lipschitz-inequality-3}),
\begin{equation}
\small
g(\textbf{x}_0) - L_{q}^{j}||\mu||_{p} \ge 0 \Rightarrow ||\mu||_{p} \le \frac{g(\textbf{x}_0)}{L_{q}^{j}}
\label{formula:lipschitz-inequality-3}
\end{equation}
that is Formula~(\ref{formula:lipschitz-inequality-4}):
\begin{equation}
\small
||\mu||_{p} \le \frac{f_c(\textbf{x}_0)-f_j(\textbf{x}_0)}{L_{q}^{j}}
\label{formula:lipschitz-inequality-4}
\end{equation}
where $L_{q}^{j}=max_{\textbf{x}}||\nabla (f_c(\textbf{x}) - f_j(\textbf{x}))||_{q}$, $\frac{1}{p} + \frac{1}{q} = 1$, and $1\le p,q \le \infty$. 

In order to ensure the classification result will not change, that is $argmax_{1\le i \le C} f_i(\textbf{x}_0 + \mu) = c$, we use the minimum of the bound on $\mu$ over $j \ne c$ to obtain the inequality~(\ref{formula:lipschitz-inequality-5}) with $\frac{1}{p} + \frac{1}{q} = 1$ and $1\le p,q \le \infty$,
\begin{equation}
\small
||\mu||_{p} \le min_{j \ne c}\frac{f_c(\textbf{x}_0)-f_j(\textbf{x}_0)}{L_{q}^{j}} = min_{j \ne c}\frac{f_c(\textbf{x}_0)-f_j(\textbf{x}_0)}{max_{\textbf{x}}||\nabla (f_c(\textbf{x}) - f_j(\textbf{x}))||_{q}} = R
\label{formula:lipschitz-inequality-5}
\end{equation}
which indicates that as long as $||\mu||_{p}$ is small enough to fulfill the above bound, the classifier decision will never be changed, which marks the robustness of this classifier $F$. Hence, we formally prove Theorem~\ref{theorem:robustness-bound} on the robustness bound ($R$) for a classifier $F$. 
\end{proof}

For a deep neural network $F_k$, we have its robustness bound as Formula~(\ref{formula:robustness-bound-model-k}) shows.
\begin{equation}
\small
R^{k} = min_{j \ne c}\frac{f_c^k(\textbf{x}_0)-f_j^k(\textbf{x}_0)}{max_{\textbf{x}}||\nabla(f_c^k(\textbf{x})-f_j^k(\textbf{x}))||_{q}}
\label{formula:robustness-bound-model-k}
\end{equation}
Let $g_{j}^{k}(\textbf{x}) = f_{c}^{k}(\textbf{x}) - f_{j}^{k}(\textbf{x})$, we have the robustness bound as Formula~(\ref{formula:robustness-bound-model-k-2}) shows.
\begin{equation}
\small
R^{k} = min_{j \ne c}\frac{g_{j}^{k}(\textbf{x}_0)}{max_{\textbf{x}}||\nabla(g_{j}^{k}(\textbf{x}))||_{q}}
\label{formula:robustness-bound-model-k-2}
\end{equation}

Given $S$ networks, combining their predictions with model averaging ($avg$), we have the \textit{i}th element in the combined probability vector as $f_i^{avg}(\textbf{x}) = \frac{1}{S} \sum_{k=1}^{S}f_i^k(\textbf{x})$ corresponding to the robustness bound as the following Formula~(\ref{formula:robustness-bound-model-k-3}) shows.

\begin{equation}
\small
\begin{aligned}
R^{avg} &= min_{j \ne c}\frac{f_c^{avg}(\textbf{x}_0)-f_j^{avg}(\textbf{x}_0)}{max_{\textbf{x}}||\nabla(f_c^{avg}(\textbf{x})-f_j^{avg}(\textbf{x}))||_{q}} \\
&= min_{j \ne c}\frac{g_c^{avg}(\textbf{x}_0)}{max_{\textbf{x}}||\nabla(g_c^{avg}(\textbf{x}))||_{q}}
\end{aligned}
\label{formula:robustness-bound-model-k-3}
\end{equation}

Assume the minimum of the robustness bound can be achieved with the prediction result $c$ and $j$ for each model $F^{k}$ including the ensemble $F^{avg}$ as Formula~(\ref{formula:robustness-bound-model-k-4}) shows.
\begin{equation}
\small
\begin{aligned}
R^{k} &= \frac{g_{j}^{k}(\textbf{x}_0)}{max_{\textbf{x}}||\nabla(g_{j}^{k}(\textbf{x}))||_{q}} \\
R^{avg} &= \frac{g_{j}^{avg}(\textbf{x}_0)}{max_{\textbf{x}}||\nabla(g_{j}^{avg}(\textbf{x}))||_{q}}
\end{aligned}
\label{formula:robustness-bound-model-k-4}
\end{equation}
where $g_{j}^{k}(\textbf{x}) = f_{c}^{k}(\textbf{x}) - f_{j}^{k}(\textbf{x})$ and $g_j^{avg}(\textbf{x}) = \frac{1}{S} \sum_{k=1}^{S}g_j^k(\textbf{x})$. 

We can then derive Theorem~\ref{theorem:ensemble-robustness-bound-enhancement} that states $\exists~k, 1 \le k \le S, R^{k} \le R^{avg}$, where the equal sign corresponds to the case that all member networks are the perfect duplicates. This theorem further indicates that the ensembles of high diversity can improve the robustness of individual member networks.

\begin{theorem}[Ensemble Robustness Bound Enhancement]
\label{theorem:ensemble-robustness-bound-enhancement} 
Let $R^{avg}$ denote the robustness bound for an ensemble, which combines its member network predictions through model averaging ($avg$). We can always find one member network $F^{k}$ with its robustness bound $R^{k}$ satisfying $R^{k} \le R^{avg}$.
\end{theorem}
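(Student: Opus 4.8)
The plan is to reduce the claim to the elementary \emph{mediant inequality} for nonnegative fractions, after exploiting the linearity of model averaging. Starting from Formula~(\ref{formula:robustness-bound-model-k-3}), fix the index $j$ that attains the minimum in $R^{avg}$; the simplifying assumption recorded in Formula~(\ref{formula:robustness-bound-model-k-4}) lets us use this common $j$ for every member network $F^k$ as well. Write $a_k = g_j^k(\mathbf{x}_0) = f_c^k(\mathbf{x}_0)-f_j^k(\mathbf{x}_0)$ and $b_k = \max_{\mathbf{x}}\|\nabla g_j^k(\mathbf{x})\|_q$, so that $R^k = a_k/b_k$. Here $a_k \ge 0$ because each member network predicts the label $c$ on $\mathbf{x}_0$, and $b_k > 0$ for any nondegenerate classifier. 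Since $g_j^{avg} = \frac{1}{S}\sum_{k=1}^{S} g_j^k$, the numerator of $R^{avg}$ is immediately $g_j^{avg}(\mathbf{x}_0) = \frac{1}{S}\sum_k a_k$.

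The key step is the denominator. Because $\nabla g_j^{avg}(\mathbf{x}) = \frac{1}{S}\sum_k \nabla g_j^k(\mathbf{x})$, the triangle inequality for the $\ell_q$ norm gives $\|\nabla g_j^{avg}(\mathbf{x})\|_q \le \frac{1}{S}\sum_k \|\nabla g_j^k(\mathbf{x})\|_q$ at every input $\mathbf{x}$; maximizing the left-hand side over $\mathbf{x}$ and bounding each summand on the right by its own maximum yields $\max_{\mathbf{x}}\|\nabla g_j^{avg}(\mathbf{x})\|_q \le \frac{1}{S}\sum_k b_k$. Substituting both bounds into Formula~(\ref{formula:robustness-bound-model-k-4}) gives $R^{avg} \ge (\sum_k a_k)/(\sum_k b_k)$. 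Finally, $\sum_k a_k = \sum_k b_k\,(a_k/b_k) \ge (\min_k a_k/b_k)\sum_k b_k$, and dividing through by $\sum_k b_k > 0$ shows $\min_k R^k = \min_k (a_k/b_k) \le (\sum_k a_k)/(\sum_k b_k) \le R^{avg}$, so any $k$ attaining $\min_k R^k$ satisfies $R^k \le R^{avg}$, which is exactly Theorem~\ref{theorem:ensemble-robustness-bound-enhancement}.

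The main obstacle I anticipate is the denominator step: one must be careful that maximization over $\mathbf{x}$ and summation over $k$ do not commute, and that the \emph{useful} direction is ``the max of a sum is at most the sum of the maxes'', $\max_{\mathbf{x}}\sum_k(\cdot) \le \sum_k\max_{\mathbf{x}}(\cdot)$, since each $b_k$ is taken over $\mathbf{x}$ independently. It is precisely this observation together with the norm triangle inequality that forces $\max_{\mathbf{x}}\|\nabla g_j^{avg}\|_q$ to be small enough. A secondary point worth stating carefully is the equality case: $R^k = R^{avg}$ for all $k$ requires all ratios $a_k/b_k$ to coincide \emph{and} both of the above inequalities to be tight simultaneously (the gradients aligned and attaining their maxima at a common point), which is exactly the degenerate situation of perfectly duplicated member networks mentioned just after the theorem. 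I would also note that the common-$j$ assumption can be dropped at no cost: applying the argument with the $j$ that minimizes $R^{avg}$, and then using $R^k \le a_k/b_k$ for that same $j$ (since $R^k$ is itself a minimum over all $j' \ne c$), recovers the conclusion in full generality.
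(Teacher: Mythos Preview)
Your proposal is correct and uses essentially the same ingredients as the paper's proof: the triangle inequality for $\|\cdot\|_q$ together with $\max_{\mathbf{x}}\sum_k(\cdot)\le\sum_k\max_{\mathbf{x}}(\cdot)$ to control the denominator of $R^{avg}$, followed by the mediant-type comparison $\min_k(a_k/b_k)\le(\sum_k a_k)/(\sum_k b_k)$. The paper wraps the same chain in a proof by contradiction rather than your direct argument, and it disposes of the common-$j$ assumption exactly as you do in your final remark.
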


\begin{proof}[Proof of Theorem~\ref{theorem:ensemble-robustness-bound-enhancement}]
We prove the ensemble robustness bound enhancement by contradiction. First, we assume $\forall~k, 1 \le k \le S, R^{k} > R^{avg}$, that is as Formula~(\ref{formula:robustness-proof-1}) shows.
\begin{equation}
\small
g_{j}^{k}(\textbf{x}_0) (max_{\textbf{x}}||\nabla(g_{j}^{avg}(\textbf{x}))||_{q}) > g_{j}^{avg}(\textbf{x}_0) (max_{\textbf{x}}||\nabla(g_{j}^{k}(\textbf{x}))||_{q})
\label{formula:robustness-proof-1}
\end{equation}
For each $k\in \{1,...,S\}$, this inequality holds. To add them all, we have Formula~(\ref{formula:robustness-proof-2}).
\begin{equation}
\small
\sum_{k=1}^{S} g_{j}^{k}(\textbf{x}_0) (max_{\textbf{x}}||\nabla(g_{j}^{avg}(\textbf{x}))||_{q}) >
\sum_{k=1}^{S} g_{j}^{avg}(\textbf{x}_0) (max_{\textbf{x}}||\nabla(g_{j}^{k}(\textbf{x}))||_{q})
\label{formula:robustness-proof-2}
\end{equation}
That is Formula~(\ref{formula:robustness-proof-3}).
\begin{equation}
\small
(max_{\textbf{x}}||\nabla(g_{j}^{avg}(\textbf{x}))||_{q})\sum_{k=1}^{S} g_{j}^{k}(\textbf{x}_0) >
g_{j}^{avg}(\textbf{x}_0) \sum_{k=1}^{S} (max_{\textbf{x}}||\nabla(g_{j}^{k}(\textbf{x}))||_{q})
\label{formula:robustness-proof-3}
\end{equation}
Given $g_j^{avg}(\textbf{x}) = \frac{1}{S} \sum_{k=1}^{S}g_j^k(\textbf{x})$, we have Formula~(\ref{formula:robustness-proof-4}):
\begin{equation}
\small
(max_{\textbf{x}}||\nabla(\sum_{k=1}^{S} g_{j}^{k}(\textbf{x}))||_{q})\frac{1}{S}\sum_{k=1}^{S} g_{j}^{k}(\textbf{x}_0) >
\frac{1}{S}\sum_{k=1}^{S}g_{j}^{k}(\textbf{x}_0) \sum_{k=1}^{S} (max_{\textbf{x}}||\nabla(g_{j}^{k}(\textbf{x}))||_{q})
\label{formula:robustness-proof-4}
\end{equation}
Therefore, we have the following Formula~(\ref{formula:robustness-proof-5}).
\begin{equation}
\small
(max_{\textbf{x}}||\nabla(\sum_{k=1}^{S} g_{j}^{k}(\textbf{x}))||_{q}) > \sum_{k=1}^{S} (max_{\textbf{x}}||\nabla(g_{j}^{k}(\textbf{x}))||_{q})
\label{formula:robustness-proof-5}
\end{equation}
According to the triangle inequality, we have Formula~(\ref{formula:robustness-proof-6}).
\begin{equation}
\small
\begin{aligned}
max_{\textbf{x}}||\nabla(\sum_{k=1}^{S} g_{j}^{k}(\textbf{x}))||_{q} &\le
max_{\textbf{x}}(\sum_{k=1}^{S} ||\nabla(g_{j}^{k}(\textbf{x}))||_{q}) \\
&\le \sum_{k=1}^{S} (max_{\textbf{x}}||\nabla(g_{j}^{k}(\textbf{x}))||_{q})
\end{aligned}
\label{formula:robustness-proof-6}
\end{equation}
which contradicts with the derived inequality (Formula~(\ref{formula:robustness-proof-5})). Therefore, the previous assumption does not hold. We show that $\exists~k, 1 \le k \le S, R^{k} \le R^{avg}$, demonstrating that the robustness of a member network can be further improved with an ensemble team.
Furthermore, for a network $F^{k}$, if its robustness bound $R^{k}$ was not obtained with class $j$. We have $\exists i \ne j~(i,j \ne c)~\text{and}~ R^{k} = \frac{g_{i}^{k}(\textbf{x}_0)}{max_{\textbf{x}}||\nabla(g_{i}^{k}(\textbf{x}))||_{q}} \le \frac{g_{j}^{k}(\textbf{x}_0)}{max_{\textbf{x}}||\nabla(g_{j}^{k}(\textbf{x}))||_{q}}$, where Theorem~\ref{theorem:ensemble-robustness-bound-enhancement} still holds. 
\end{proof}

The above analysis formally certifies that an ensemble team of diverse member networks can further improve the robustness of individual networks.

\section{Experimental Evaluation}\label{section:experimental-evaluation}
We conducted a comprehensive experimental evaluation on four benchmark datasets, CIFAR-10, ImageNet, Cora and MNIST, for pruning the given entire ensemble of 10 individual member models for CIFAR-10, ImageNet and Cora and 7 member models for MNIST (see Table~\ref{table:ens-base-model-pools} for all member models). All the experiments were performed on an Intel i7-10700K server with the NVIDIA GeForce RTX 3090 (24GB) GPU on Ubuntu 20.04.

\begin{figure}[h!]
\centering
    \includegraphics[width=0.75\textwidth]{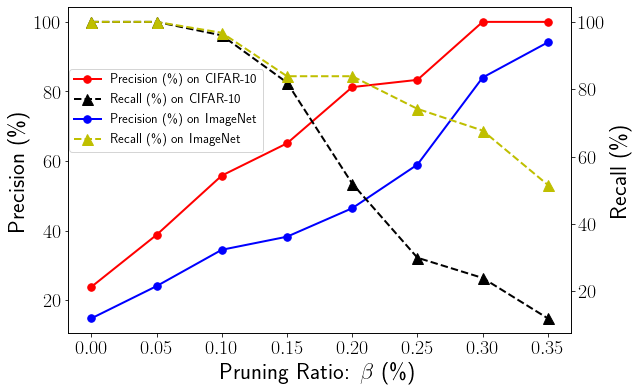}
    \caption{Impact of $\beta$ on Precision and Recall ($S_d=4$, F-GD)}
    \label{fig:precision-recall-beta}
\end{figure}

\subsection{Efficiency of Pruning with Varying $\beta$} \label{section:tuning-beta}
The hyperparameter $\beta$ in our hierarchical ensemble pruning determines the percentage of candidate ensembles to be pruned out during each iteration for a specific ensemble team size $S$. In general, a higher $\beta$ tends to remove more ensemble teams in our hierarchical pruning, potentially lowering the recall score. On the other hand, a high precision score is expected when the ensemble pruning algorithm can precisely find the high quality sub-ensembles with much smaller team sizes and equal or better ensemble accuracy of the reference accuracy of the entire ensemble team. We analyze the impacts of the hyperparameter $\beta$ in Figure~\ref{fig:precision-recall-beta}, where we vary $\beta$ from 5\% to 35\% for both CIFAR-10 and ImageNet using F-GD based hierarchical pruning and the desired team size $S_d=4$. As shown in Figure~\ref{fig:precision-recall-beta}, when the $\beta$ increases, the precision of ensemble pruning increases, and the recall decreases for both CIFAR-10 and ImageNet. Our focal diversity based hierarchical ensemble pruning is designed to achieve high precision and good recall, e.g., over 75\% precision and 50\% recall. Therefore, with $S_d=4$ on CIFAR-10, we set $\beta=20\%$ to obtain 81.25\% precision and 52\% recall of the F-GD based hierarchical pruning. We follow the same principles in determining the $\beta$ value for other experiments.

\begin{table}[h!]
\centering
\caption{Impact of Desired Team Size $S_d$ on Hierarchical Pruning (CIFAR-10)}
\label{table:cifar10-beta-impact}
\small
\scalebox{1.0}{
\begin{tabular}{|c|c|c|c|}
\hline
$S_d$             & 4     & 5     & 6     \\ \hline
$\beta$ (\%)      & 20    & 10    & 4     \\ \hline
Precision (\%) & 81.25 & \textbf{100}   & 97.56 \\ \hline
Recall (\%)    & 52    & 47.14 & 57.14 \\ \hline
\end{tabular}
} 
\end{table}

\subsection{Impact of Desired Team Size $S_d$}
The hyperparameter $S_d$ specifies the target ensemble team size after ensemble pruning. Table~\ref{table:cifar10-beta-impact} presents the impacts of varying $S_d$ on our focal diversity powered hierarchical pruning approach on CIFAR-10. For each $S_d$, we find the optimal $\beta$ following the principles introduced in Section~\ref{section:tuning-beta}. We highlight two interesting observations.
{\it First,} for different $S_d$, our hierarchical pruning can consistently deliver high precision of over 81.25\% with good recall, which demonstrates the effectiveness of our hierarchical pruning approach. In particular, $S_d$ = 5 produces the highest precision of 100\%.
{\it Second,} as the desired team size $S_d$ increases, there is a decrease in the optimal $\beta$ value, indicating that fewer ensembles will be pruned out for each iteration to achieve a good recall rate.
In practice, we recommend setting the desired ensemble team size $S_d$ up to half the size of the entire ensemble, which allows our hierarchical pruning to efficiently find these sub-ensembles that offer comparable or improved ensemble accuracy over the entire ensemble and substantially reduce the ensemble execution cost.

\subsection{Focal Diversity Pruning Methods}
We then evaluate our hierarchical ensemble pruning algorithm with four focal diversity metrics and the focal diversity consensus voting based refinement. The precision, recall and cost reduction of ensemble pruning are primarily used to measure the pruning efficiency on four benchmark datasets.

\begin{table}[h!]
\centering
\caption{Hierarchical Pruning with $S_d$=5 on CIFAR-10}
\label{table:cifar10-pruning-s5}
\small
\scalebox{1.0}{
\begin{tabular}{|c|c|c|c|c|c|}
\hline
Methods & F-CK & F-BD & F-KW & F-GD & MAJORITY-F \\ \hline
Precision (\%) & 85.71 & \textbf{100} & \textbf{100} & \textbf{100} & \textbf{100} \\ \hline
Recall (\%) & 17.14 & 51.43 & 51.43 & 52.86 & 47.14 \\ \hline
Cost Reduction & 50\% & 50\% & 50\% & 50\% & 50\% \\ \hline
\end{tabular}
} 
\end{table}

\begin{table}[h!]
\centering
\caption{Hierarchical Pruning with $S_d$=4 on CIFAR-10}
\label{table:cifar10-pruning-s4}
\small
\scalebox{1.0}{
\begin{tabular}{|c|c|c|c|c|c|}
\hline
Methods & F-CK & F-BD & F-KW & F-GD & MAJORITY-F \\ \hline
Precision (\%) & 26.09 & \textbf{81.25} & \textbf{81.25} & \textbf{81.25} & \textbf{81.25} \\ \hline
Recall (\%) & 12.00 & 52.00 & 52.00 & 52.00 & 52.00 \\ \hline
Cost Reduction & 60\% & 60\% & 60\% & 60\% & 60\% \\ \hline
\end{tabular}
} 
\end{table}

\noindent \textbf{CIFAR-10.\/} 
We compare the four focal diversity metrics using our hierarchical ensemble pruning and the focal diversity consensus voting based pruning (MAJORITY-F) in Table~\ref{table:cifar10-pruning-s5}, where we set $\beta=10\%, S_d=5$, and the F-BD and F-GD pruning results correspond to Figure~\ref{fig:cifar10-hq-a-pruning-bd-t5} and~\ref{fig:cifar10-hq-a-pruning-gd-t5}. Two interesting observations should be highlighted.
{\it First,} our hierarchical ensemble pruning approach achieves a very high precision of over 85\% in ensemble pruning with all four focal diversity metrics. Especially, three focal diversity metrics, F-BD, F-KW and F-GD, produces 100\% precision in identifying high quality sub-ensembles with much smaller team sizes and equal or better ensemble accuracy than the entire deep ensemble.
{\it Second,} with the focal diversity consensus voting based ensemble pruning, coined as MAJORITY-F, the 100\% precision is maintained.
MAJORITY-F further refines the ensemble pruning results by pruning out these ensemble teams that are not chosen by the majority of four focal diversity metrics, which slightly reduces the total number of selected ensembles (i.e., true positives given 100\% precision). Hence, the slightly lower recall score of MAJORITY-F is expected in comparison with F-BD, F-KW and F-GD. 
We find similar observations in Table~\ref{table:cifar10-pruning-s4} by setting $\beta=20\%, S_d=4$. Even though the F-CK based hierarchical pruning achieves lower performance than the other three focal diversity metrics, the focal diversity consensus voting method, MAJORITY-F, can still deliver consistent high precision of 81.25\% and recall of 52\%.

\begin{table}[h!]
\centering
\caption{Hierarchical Pruning with $S_d$=5 on ImageNet}
\label{table:imagenet-pruning-s5}
\small
\scalebox{1.0}{
\begin{tabular}{|c|c|c|c|c|c|}
\hline
Methods & F-CK & F-BD & F-KW & F-GD & MAJORITY-F \\ \hline
Precision (\%) & 0 & 75.61 & 75.61 & 74.42 & \textbf{78.95} \\ \hline
Recall (\%) & 0 & 64.58 & 64.58 & 66.67 & 62.50 \\ \hline
Cost Reduction & 50\% & 50\% & 50\% & 50\% & 50\% \\ \hline
\end{tabular}
} 
\end{table}

\noindent \textbf{ImageNet.\/} We then evaluate our hierarchical ensemble pruning approach on ImageNet by setting $\beta=10\%, S_d=5$. Table~\ref{table:imagenet-pruning-s5} shows the experimental results.
We highlight two interesting observations.
{\it First,} our hierarchical ensemble pruning approach performs very well with high precision of over 74.42\% by using F-BD, F-KW and F-GD metrics, which is effective and accurate in selecting high quality sub-ensembles with a much smaller team size ($S=5$ vs. $M=10$) and higher ensemble accuracy than the 79.82\% reference accuracy of the entire ensemble team on ImageNet. On the other hand, the F-CK based pruning failed to identify any satisfactory ensemble teams, which indicates the inherent limitation of the CK diversity, although it has been optimized by the focal diversity measures.
{\it Second,} using our focal diversity consensus voting based ensemble pruning, MAJORITY-F, we can effectively improve the precision by over 3.34\% from 74.42\%$\sim$75.61\% to 78.95\%, which further demonstrates the enhanced ensemble pruning performance by consolidating the majority of focal diversity metrics.

\begin{table}[h!]
\centering
\caption{Hierarchical Pruning with $S_d$=4 on Cora}
\label{table:cora-pruning-s4}
\small
\scalebox{1.0}{
\begin{tabular}{|c|c|c|c|c|c|}
\hline
Methods & F-CK & F-BD & F-KW & F-GD & MAJORITY-F \\ \hline
Precision (\%) & \textbf{91.67} & 86.30 & 86.30 & 84.93 & 87.14 \\ \hline
Recall (\%) & 64.71 & 61.76 & 61.76 & 60.78 & 59.80 \\ \hline
Cost Reduction & 60\% & 60\% & 60\% & 60\% & 60\% \\ \hline
\end{tabular}
} 
\end{table}

\noindent \textbf{Cora.\/} We also evaluate and compare our focal diversity based hierarchical pruning approach on Cora, a graph dataset. Table~\ref{table:cora-pruning-s4} shows the results with $\beta=10\%$ and $S_d=4$. All focal diversity based hierarchical pruning methods achieve over 84.93\% precision in identifying high quality sub-ensembles with the same or improved ensemble accuracy over 87.90\% (the reference accuracy of the entire ensemble for Cora). Interestingly, the F-CK pruning for Cora achieves the best ensemble pruning performance in terms of the highest precision of 91.67\% and highest recall of 64.71\%, indicating the diverse utilities of different focal diversity pruning methods for different datasets. Therefore, the focal diversity consensus voting based ensemble pruning is beneficial for delivering more stable and consistent ensemble pruning efficiency in terms of precision and robustness.

\begin{table}[h!]
\centering
\caption{Hierarchical Pruning with $S_d$=3 on MNIST}
\label{table:mnist-pruning-s3}
\small
\scalebox{1.0}{
\begin{tabular}{|c|c|c|c|c|c|}
\hline
Methods        & F-CK & F-BD & F-KW & F-GD & MAJORITY-F  \\ \hline
Precision (\%) & \textbf{100}  & 75   & 75   & 75   & 75   \\ \hline
Recall (\%)    & 60   & 60   & 60   & 60   & 60   \\ \hline
Cost Reduction         & 57\% & 57\% & 57\% & 57\% & 57\% \\ \hline
\end{tabular}
} 
\end{table}

\noindent \textbf{MNIST.\/} We then use popular machine learning models, such as KNN, SVM and Logistic Regression in Table~\ref{table:ens-base-model-pools} to evaluate the effectiveness of our focal diversity based hierarchical pruning methods on MNIST. 
Table~\ref{table:mnist-pruning-s3} shows the results with $\beta=50\%$ and $S_d=3$. We highlight two interesting observations.
{\it First,} all focal diversity based pruning methods achieved over 75\% precision in identifying sub-ensembles with the same or improved ensemble accuracy over 96.36\%, which is the reference accuracy of the entire ensemble for MNIST. It demonstrates that our focal diversity based hierarchical pruning methods are also effective on representative machine learning models.
{\it Second,} among the Top-3 selected ensemble teams by F-GD or F-CK, the ensemble team \verb|346| (RBF SVM, Random Forest and Neural Network), achieved the highest accuracy of 97.04\%, significantly outperforming the entire ensemble of 7 machine learning models with 96.36\% accuracy and reducing the ensemble execution cost by over 57\%. Moreover, given the Random Forest is already an ensemble of decision trees, it shows that integrating ensemble models, such as the Random Forest, in an ensemble team can further improve the overall predictive performance.

\subsection{Focal vs. Baseline Diversity in Hierarchical Pruning}
We next compare our focal diversity metrics with baseline diversity metrics by using the same hierarchical pruning algorithm. Figure~\ref{fig:cifar10-fq-q-pruning-gd} shows the comparison of our focal F-GD diversity (Figure~\ref{fig:cifar10-pruning-f-gd-t4}) and the baseline GD diversity (Figure~\ref{fig:cifar10-pruning-gd-t4}) in hierarchical pruning for the ensemble teams of size $S=4$. The black dots mark the ensembles that are pruned out by the hierarchical pruning while the red ones represent the remaining selected ensembles. The reference ensemble accuracy 96.33\% of the entire deep ensemble on CIFAR-10 is marked by the horizontal black dashed line.
Overall, the hierarchical pruning with our focal F-GD diversity metric achieved much better performance than the baseline GD diversity metric. There are two primary reasons behind this observation.
{\it First,} our focal diversity metrics can better capture the failure independence of member networks of a deep ensemble than baseline diversity metrics. Therefore, when pruning out a low focal diversity branch, such as in Figure~\ref{fig:cifar10-hq-a-pruning-gd-t3} with $S=3$, most ensembles of a larger size with low diversity (with low F-GD scores) will also be pruned out in Figure~\ref{fig:cifar10-hq-a-pruning-gd-t4} (same as Figure~\ref{fig:cifar10-pruning-f-gd-t4}) with $S=4$.
{\it Second,} focal diversity metrics are more effectively correlated to ensemble accuracy than baseline diversity metrics. Therefore, by pruning out low diversity ensembles, our focal diversity metrics can successfully identify high performance ensemble teams with high ensemble accuracy and low ensemble execution cost.

\begin{figure}[h!]
\centering
    \subfloat[Focal Diversity (F-GD)]{
    \centering
    \includegraphics[width=0.45\textwidth]{cifar10-pruning-f-gd-team-size-4}
    \label{fig:cifar10-pruning-f-gd-t4}
    }
    \subfloat[Baseline Diversity (GD)]{
    \centering
    \includegraphics[width=0.45\textwidth]{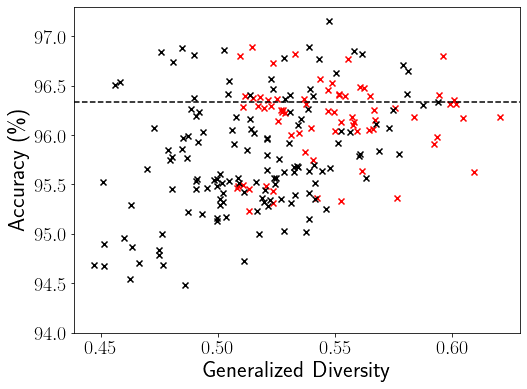}
    \label{fig:cifar10-pruning-gd-t4}
    }
    \caption{Hierarchical Pruning with F-GD and GD for Ensemble Teams of $S=4$ (CIFAR-10, $\beta=10\%, S_d=5$)}
    \label{fig:cifar10-fq-q-pruning-gd}
\end{figure}

\begin{table}[h!]
\centering
\caption{{Comparing Top-3 Ensemble Teams of $S$=4 by F-GD and GD (CIFAR-10, $\beta=10\%, S_d=5$) with the 96.33\% accuracy of the entire ensemble of 10 models shown in Table~\ref{table:example-ensembles-cifar10-imagenet}}} 
\label{table:top-3-ensembles-fgd-gd-cifar10}
\scalebox{1.0}{
\begin{tabular}{|c|c|c|c|c|c|c|}
\hline
Method            & \multicolumn{3}{c|}{F-GD} & \multicolumn{3}{c|}{GD} \\ \hline
Ensemble Team     & 0235    & 0234   & 0123   & 0268   & 0456   & 1345  \\ \hline
Ensemble Acc (\%) & 96.89   & 96.84  & 97.15  & 96.18  & 95.63  & 96.17 \\ \hline
\begin{tabular}[c]{@{}c@{}}Acc Improv (\%)\\ (Over 96.33\%)\end{tabular} & \textbf{0.56}    & \textbf{0.51}   & \textbf{0.82}   & -0.15  & -0.70  & -0.16 \\ \hline\end{tabular}
}
\end{table}

\begin{table}[h!]
\centering
\caption{{Comparing Top-3 Ensemble Teams of $S$=5 by F-GD and GD (ImageNet, $\beta=10\%, S_d=5$) with the 96.33\% accuracy of the entire ensemble of 10 models shown in Table~\ref{table:example-ensembles-cifar10-imagenet}}} 
\label{table:top-3-ensembles-fgd-gd-imagenet}
\scalebox{1.0}{
\begin{tabular}{|c|c|c|c|c|c|c|}
\hline
Method                         & \multicolumn{3}{c|}{F-GD} & \multicolumn{3}{c|}{GD} \\ \hline
Ensemble Team                  & 12345  & 23459  & 23458  & 03478  & 02356 & 01257 \\ \hline
Ensemble Acc (\%)              & 80.77  & 80.50  & 80.44  & 78.53  & 79.25 & 79.19 \\ \hline
\begin{tabular}[c]{@{}c@{}}Acc Improv (\%)\\ (Over 79.82\%)\end{tabular} & \textbf{0.95}   & \textbf{0.68}   & \textbf{0.62}   & -1.29  & -0.57 & -0.63 \\ \hline
\end{tabular}}
\end{table}

Table~\ref{table:top-3-ensembles-fgd-gd-cifar10} lists the Top-3 sub-ensembles selected by F-GD and GD in Figure~\ref{fig:cifar10-fq-q-pruning-gd}. All the Top-3 sub-ensembles selected by our hierarchical pruning using focal diversity F-GD improve the reference accuracy of 96.33\% achieved by the entire ensemble on CIFAR-10. In comparison, the Top-3 sub-ensembles chosen by the baseline GD diversity all result in ensemble accuracy lower than 96.33\%, the reference accuracy of the entire ensemble on CIFAR-10, failing to meet the ensemble pruning objective. Similar observations can be found for the ImageNet dataset in Table~\ref{table:top-3-ensembles-fgd-gd-imagenet}.

The empirical results on all four benchmark datasets demonstrate that our focal diversity based hierarchical pruning framework and algorithms are effective in identifying and selecting space and time efficient sub-ensembles from a given large ensemble team, while offering competitive ensemble accuracy.

\begin{table}[h!]
\centering
\caption{Execution Time (s) Comparison of Baseline and Hierarchical Ensemble Pruning}
\label{table:pruning-execution-time}
\scalebox{1.0}{
\begin{tabular}{|c|c|c|c|}
\hline
\multicolumn{2}{|c|}{Execution Time (s)}       & CIFAR-10 & ImageNet \\ \hline

\multirow{4}{*}{Baseline (No Pruning)}& BD   & 0.90     & 79.08    \\ \cline{2-4}
                                      & GD   & 0.84     & 75.83    \\ \cline{2-4}
                                      & F-BD & 2.01     & 97.26    \\ \cline{2-4}
                                      & F-GD & 2.01     & 93.51    \\ \hline
\multirow{2}{*}{Hierarchical Pruning} & F-BD & \textbf{0.63}     & \textbf{21.89}    \\ \cline{2-4}
                                      & F-GD & \textbf{0.62}     & \textbf{21.79}   \\ \hline
\end{tabular}
}
\end{table}

\subsection{Execution Time Comparison}

We show the execution time of our hierarchical ensemble pruning based on F-BD and F-GD metrics in Table~\ref{table:pruning-execution-time} for CIFAR-10 and ImageNet. The baseline is no pruning, where a diversity score, such as GD and F-GD, is computed for each possible ensemble team. We highlight three interesting observations. 
{\it First,} compared to the baseline focal diversity metrics, F-BD and F-GD, our hierarchical pruning significantly accelerates the process of identifying high-quality sub-ensembles by 3.2$\sim$4.4$\times$, substantially improving the ensemble pruning efficiency.
Table~\ref{table:pruning-execution-time-breakdown} presents the execution time breakdown for each team size $S$ from 2 to 5 using the baseline no pruning and hierarchical ensemble pruning. We list the number of all sub-ensemble teams for each team size in the 2nd column (\#All Teams). For the baseline no pruning, we show the F-GD diversity computation time for all sub-ensemble teams in the 3rd column for CIFAR-10 and 8th column for ImageNet. For the hierarchical pruning approach, we list the execution time for calculating F-GD scores (4th column for CIFAR-10 and 9th column for ImageNet), the number of teams that participate in the diversity computation (5th column for CIFAR-10 and 10th column for ImageNet), and the reduction in both execution time and \#Teams (6th \& 7th columns for CIFAR-10 and 11th \& 12th columns for ImageNet). 
Our hierarchical pruning approach will examine all the 45 smallest ensembles without \#Teams reduction for $S$=2. The slight execution time reduction (0.05s, 3.65\%) on ImageNet for $S$=2 can be attributed to the random measurement errors.
For both CIFAR-10 and ImageNet, the execution time reduction and \#Teams reduction by our hierarchical pruning closely match with each other, such as 89.95\% time reduction and 89.29\% of the teams pruned out for $S$=5 on ImageNet. This indicates that the overall performance improvement (3.2$\sim$4.4$\times$) of our hierarchical pruning originates from the pruned teams that will not participate in diversity computation and thus reduce the execution time.
{\it Second,} for the baseline without pruning, our focal diversity metrics, F-BD and F-GD, take longer execution time than BD and GD, which is due to the computation of focal diversity scores involving two steps for computing the focal negative correlation scores and aggregating the $S$ focal negative correlation scores for each sub-ensemble of size $S$.
{\it Third,} compared to the baseline diversity metrics, BD and GD, our F-BD and F-GD powered hierarchical pruning can reduce the execution time by 26\%$\sim$72\%, further demonstrating the efficiency of our hierarchical pruning approach.

\begin{table}[h!]
\centering
\caption{Execution Time (s) Breakdown Comparison of Baseline and Hierarchical Ensemble Pruning (F-GD)}
\label{table:pruning-execution-time-breakdown}
\scalebox{0.7}{
\begin{tabular}{|c|c|c|c|c|c|c|c|c|c|c|c|}
\hline
\multirow{3}{*}{\makecell{Team\\Size}} & Dataset & \multicolumn{5}{c|}{CIFAR-10}     & \multicolumn{5}{c|}{ImageNet}      \\ \cline{2-12}
 &
  \multirow{2}{*}{\makecell{\#All\\Teams}} &
  \makecell{No\\Pruning} &
  \multicolumn{4}{c|}{Hierarchical Pruning} &
  \makecell{No\\Pruning} &
  \multicolumn{4}{c|}{Hierarchical Pruning} \\ \cline{3-12}
 &
   &
  Time (s) &
  Time (s) &
  \#Teams &
  \makecell{Time\\Reduction\\(\%)} &
  \makecell{\#Teams\\Reduction\\(\%)} &
  Time (s) &
  Time (s) &
  \#Teams &
  \makecell{Time\\Reduction\\(\%)} &
  \makecell{\#Teams\\Reduction\\(\%)} \\ \hline
2                          & 45      & 0.06 & 0.06 & 45 & \textbf{0}     & \textbf{0}  & 1.37  & 1.32 & 45 & \textbf{3.65}  & \textbf{0}  \\ \hline
3                          & 120     & 0.26 & 0.18 & 85 & \textbf{30.77} & \textbf{29.17} & 9.17  & 5.82 & 82 & \textbf{36.53} & \textbf{31.67} \\ \hline
4                          & 210     & 0.63 & 0.23 & 86 & \textbf{63.49} & \textbf{59.05} & 28.34 & 9.78 & 78 & \textbf{65.49} & \textbf{62.86} \\ \hline
5                          & 252     & 1.03 & 0.15 & 37 & \textbf{85.44} & \textbf{85.32} & 52.94 & 5.32 & 27 & \textbf{89.95} & \textbf{89.29} \\ \hline
\end{tabular}
}
\end{table}

\begin{table}[h!]
\centering
\caption{Ensemble Execution Cost Reduction by Hierarchical Ensemble Pruning (ImageNet, $\beta$ = 5\%, $S_d$ = 5)}
\label{table:ensemble-execution-cost-reduction}
\scalebox{0.88}{
\begin{tabular}{|c|c|c|c|c|c|c|}
\hline
\multicolumn{3}{|c|}{\multirow{2}{*}{Ensemble Team}}       & \multirow{2}{*}{Ensemble Acc (\%)} & \multicolumn{3}{c|}{Execution Costs}            \\ \cline{5-7}
\multicolumn{3}{|c|}{}                      &               & \#Params (M)   & GFLOPs        & Inference Time (ms) \\ \hline
\multicolumn{2}{|c|}{Baseline} & 0123456789 & 79.82         & 481.73         & 149.94        & 456.00              \\ \hline
\multirow{3}{*}{\makecell{Top-1 teams\\for each size $S$}} & $S$=3 & 245 & 80.42 (+0.60)                      & 92.64 (\textbf{-81\%}) & 58.50 (\textbf{-61\%}) & 153.69 (\textbf{-66\%}) \\ \cline{2-7}
             & $S$=4           & 2345       & 80.70 (+0.88) & 117.67 (-76\%) & 67.04 (-55\%) & 160.93 (65\%)       \\ \cline{2-7}
             & $S$=5           & 12345      & \textbf{80.77 (+0.95)} & 125.65 (-74\%) & 72.80 (-51\%) & 208.49 (-54\%)     \\ \hline
\end{tabular}
} 
\end{table}

Compared with the entire ensemble of all 10 base models, our focal diversity based hierarchical pruning approach can effectively reduce ensemble execution costs in terms of \#Parameters (storage cost), GFLOPs (computing cost), and inference latency. Table~\ref{table:ensemble-execution-cost-reduction} presents the ensemble execution costs for the entire ensemble and 3 Top-1 ensembles for $S = 3, 4, 5$ identified by our F-GD powered hierarchical pruning approach.
These three sub-ensembles all achieve higher ensemble accuracy than the large 10-model ensemble with much smaller team sizes, significantly improving the ensemble execution efficiency, i.e., cutting down the storage cost by 74\%$\sim$81\%, the computing cost by 51\%$\sim$61\%, and inference latency by 54\%$\sim$66\%.

\section{Conclusion}
This paper presents our hierarchical ensemble pruning approach powered by  the focal diversity metrics. The hierarchical pruning can effectively identify high quality sub-ensembles with a significantly smaller team size and the same or better ensemble accuracy than the entire ensemble team. We made three original contributions.
{\it First,} we optimize ensemble diversity measurements by using focal diversity metrics to accurately capture the failure independence among member networks of a deep ensemble, which closely correlates to ensemble predictive performance and provides effective guidance in ensemble pruning.
{\it Second,} we introduce a hierarchical ensemble pruning algorithm, powered by our focal diversity metrics, which iteratively identifies high quality sub-ensembles and preemptively prunes out the member networks of low diversity.
{\it Third,} we provide a systematic ensemble pruning approach (HQ), which consists of the focal ensemble diversity metric, hierarchical ensemble pruning algorithm and focal diversity consensus voting method.
We conducted comprehensive experimental evaluations on four benchmark datasets, CIFAR-10, ImageNet, Cora and MNIST, which demonstrates that our HQ approach can efficiently prune large ensemble teams and obtain high quality sub-ensembles with enhanced ensemble accuracy and significantly reduced execution cost over the entire large ensemble team.
Deep ensembles have been widely used in many domain-specific applications, including intelligent medical care~\cite{health-online-model-ensemble-serving-icu,dnn-ensemble-covid-19-ct}, intelligent transportation~\cite{rnn-based-path-prediction-deep-ensemble,attention-based-deep-ensemble-taxi-hailing}, and intelligent manufacturing~\cite{effective-automatic-TFT-LCD-defect-classification-ensemble,ensemble-cnn-wafer-bin-map-pattern-classification}. One of our future works will focus on applying and optimizing our hierarchical ensemble pruning to enhance the deep ensemble efficiency for these domain-specific applications.

\begin{acks}
We acknowledge the partial support from the NSF CISE grants 1564097, 2038029, 2302720, and 2312758, an IBM Faculty Award, and a CISCO Edge AI grant.
\end{acks}

\bibliographystyle{ACM-Reference-Format}
\bibliography{reference}

\end{document}